\renewcommand{\algorithmiccomment}[1]{\bgroup\hfill//~#1\egroup}
\newcommand{\compilehidecomments}{false}
	\newcommand{\huazheng}[1]{}
\newcommand{\huazheng}[1]{{\color{blue!50!black}  [\text{Huazheng:} #1]}}
\newcommand{\maxi}[1]{\mbox{maximize} & {#1 } & \\}
\newcommand{\st}{\mbox{subject to} }
\newcommand{\con}[1]{&#1 & \\}
\newcommand{\qcon}[2]{&#1, & \mbox{for } #2.  \\}
\newenvironment{lp}{\begin{equation}  \begin{array}{lll}}{\end{array}\end{equation} }
\newenvironment{lp*}{\begin{equation*}  \begin{array}{lll}}{\end{array}\end{equation*}}
\newcommand{\mt}{\mathsf{T}}
\DeclareMathOperator*{\argmax}{arg\,max}
\def \btheta {\bm \theta}
\def \btheta {\mathrm{\boldsymbol{\theta}}}
\def \bA {\mathbf{A}}
\def \cA {\mathcal{A}}
\def \bI {\mathbf{I}}
\newtheorem{example}{Example}
\newtheorem{definition}{Definition}
\newtheorem{remark}{Remark}
\newtheorem{theorem}{Theorem}
\newtheorem{corollary}{Corollary}
\newtheorem{lemma}{Lemma}
\newtheorem{claim}{Claim}
\title{
When Are Linear Stochastic Bandits Attackable?}
\author{%
  Huazheng Wang \thanks{Most work was done while with University of Virginia.}\\
  Princeton University\\
  \texttt{huazhengwang@gmail.com} \\
  \and
  Haifeng Xu\\
  University of Virginia\\
  \texttt{hx4ad@virginia.edu} \\
  \and
  Hongning Wang\\
  University of Virginia\\
  \texttt{hw5x@virginia.edu} \\  
}
\date{}
\begin{document}

\maketitle

\begin{abstract}
We study adversarial attacks on linear stochastic bandits: by manipulating the rewards, an adversary aims to control the behaviour of the bandit algorithm. Perhaps surprisingly, we first show that some attack goals can never be achieved. This is in a sharp contrast to  context-free stochastic bandits, and is intrinsically due to the correlation among arms in linear stochastic bandits. Motivated by this finding, this paper studies the \emph{attackability} of a $k$-armed linear bandit environment. We first provide a complete necessity and sufficiency characterization of attackability based on the geometry of the arms' context vectors. 
We then propose a two-stage attack method against LinUCB and Robust Phase Elimination. The method first asserts whether the given environment is attackable; and if yes, it poisons the rewards to force the algorithm to pull a target arm linear times using only a sublinear cost. Numerical experiments further validate the effectiveness and cost-efficiency of the proposed attack method. \end{abstract}

\section{Introduction}



In a contextual bandit problem, a learner takes sequential actions to interact with an environment to maximize its received cumulative reward. As a natural and important variant, linear stochastic bandits \citep{Auer02,li2010contextual,Improved_Algorithm} assume the expected reward of an arm $a$ is a linear function of its feature vector $x_a$ and an unknown bandit parameter $\btheta^*$. A linear bandit algorithm thus adaptively improves its estimation of $\btheta^*$ based on the reward feedback on its pulled arms. Thanks to their sound theoretical guarantees and promising empirical performance, linear stochastic bandits have become a reference solution to many real-world problems, such as content recommendation and online advertisement  \citep{li2010contextual,chapelle2011empirical,durand2018contextual}.

Since bandit algorithms adapt their behavior according to its received feedback, such algorithms are susceptible to adversarial attacks, especially poisoning attacks. Under such an attack, a malicious adversary observes the pulled arm and its reward feedback, and then modifies the reward to misguide the bandit algorithm to pull a target arm, which is of the adversary's interest. Due to the wide applicability of bandit algorithms in practice as mentioned before, 
understanding the robustness of such algorithms under poisoning attacks becomes increasingly important \citep{jun2018adversarial,liu2019data,garcelon2020adversarial}. 

Most existing studies on adversarial attacks in bandits focused on the context-free stochastic multi-armed bandit (MAB) settings. \citet{jun2018adversarial} and \citet{liu2019data} showed that an adversary can force any MAB algorithm to pull a target arm linear times only using a logarithmic cost.
\citet{garcelon2020adversarial} showed the vulnerability of $k$-armed linear contextual bandits under poisoning attacks. 
Linear stochastic bandits are related to context-free stochastic bandits and linear contextual bandits. Interestingly, however,  there is no known result about attacks on linear stochastic bandit until now. This paper shall provide a formal explanation for this gap ---  the analysis of  attacks to linear stochastic bandits turns out to be significantly more challenging due to the correlation among arms; in fact,  \emph{some learning environment is provably unattackable}.

Specifically, we fill the aforementioned gap by studying  poisoning attacks on $k$-armed linear stochastic bandits, where an adversary modifies the reward using a sublinear attack cost to misguide the bandit algorithm to pull a target arm $\tilde x$ linear times. We first show that a linear stochastic bandit environment is \emph{not always efficiently attackable}\footnote{Throughout this paper, ``efficient attack'' means fooling the bandit algorithm to pull the target arm for linear times with a sublinear attack cost.  We will use \emph{attackable} and \emph{efficiently attackable} interchangeably, as the adversary normally only has a limited budget and needs to design a cost-efficient strategy.}, and its attackability is governed by the feasibility of finding a parameter vector $\tilde \btheta$, under which the rewards of all non-target arms are smaller than the reward of target arm $\tilde x$ and the reward of $\tilde x$ remains the same as that in the original environment specified by $\btheta^*$. 
Intuitively, to promote the target arm $\tilde x$, an adversary needs to lower the rewards of non-target arms in the \emph{null space} of $\tilde x$ by  $\tilde \btheta$, which might not be always feasible. We prove the feasibility of the resulting convex quadratic program is both \emph{sufficient} and \emph{necessary} for attacking a linear stochastic bandit environment. 

Inspired by our attackability analysis, we propose a two-stage attack framework against linear stochastic bandit algorithms and demonstrate its application against LinUCB~\citep{li2010contextual} and Robust Phase Elimination~\citep{bogunovic2021stochastic}: the former is one of the most widely used linear contextual bandit algorithms, and the latter is a robust version designed for settings with adversarial corruptions. 
In the first stage, our method collects a carefully calibrated amount of rewards on the target arm to assess whether the given environment is attackable. The decision is based on an ``empirical'' version of our feasibility characterization. If attackable, i.e., there exists a feasible solution $\tilde \btheta$, in the second stage the method depresses the rewards the bandit algorithm receives on non-target arms based on $\tilde \btheta$, to fool the bandit algorithm to recognize the target arm as optimal. 
We prove that in an attackable environment, both algorithms can be successfully manipulated with only a sublinear cost.

Our main contributions can be summarized as follows: 
\begin{itemize}
    \item We characterize the sufficient and necessary conditions about when a stochastic linear bandit environment is attackable as the feasibility of a convex quadratic program. En route to proving the sufficiency, we also provide an oracle attack method that can attack \emph{any} no-regret learning algorithm given the knowledge of ground-truth bandit parameter $\btheta^*$.
    If the environment is unattackable, i.e., the program is infeasible, our necessity proof implies that even the vanilla LinUCB algorithm cannot be efficiently attacked. 
    A direct corollary of our characterization is that context-free stochastic MAB is always attackable, resonating the findings in \citep{jun2018adversarial,liu2019data}. 
    \item We propose a two-stage attack method that works \emph{without} the knowledge of ground-truth bandit parameter. In the first stage, the algorithm detects the attackability of the environment and estimates the model parameter. In the second stage, it solves for a working solution $\tilde\btheta$ and attacks accordingly. Our theoretical analysis shows this attack method is effective against LinUCB~\citep{li2010contextual} and Robust Phase Elimination~\citep{bogunovic2021stochastic}, i.e., pulling the target arm $T-o(T)$ times using $o(T)$ cost when the environment is attackable. 
\end{itemize}

\section{Preliminaries}

\textbf{Linear stochastic bandit.}
We study poisoning attacks to the fundamental $k$-armed linear stochastic bandit problem \citep{Auer02}, where a bandit algorithm sequentially interacts with an environment for $T$ rounds. In each round $t$, the algorithm pulls an arm $a_t \in [k] = \{1,\cdots,k\}$ from a set $\mathcal{A} = \{x_i\}^k_{i=1}$ with $k$ arms, and receives reward $r_{a_t}$ from the environment. 
Each arm $a$ is associated with a  $d$-dimensional context vector $x_a \in \mathbb{R}^{d}$; and the observed reward follows a linear mapping $r_{a_t} = x_{a_t}^\mt \btheta^* + \eta_t$, where $\btheta^* \in \mathbb{R}^{d}$ is a common unknown bandit parameter vector and $\eta_t$ is an $R$-sub-Gaussian noise term. 
We assume context vectors and parameters are all bounded; and for convenience and without loss of generality, we assume $||x_i||_2 \leq 1$ and $\Vert \btheta^*\Vert_2 \leq 1$. 
The performance of a bandit algorithm is evaluated by its pseudo-regret, which is defined as $R_T(\btheta^*) = \sum_{t=1}^T(x^{\mt}_{a^*}\btheta^* - x^{\mt}_{a_t}\btheta^*)$,
where $a^*$ is the best arm according to $\btheta^*$, i.e., ${a^*} = \arg \max_{a\in[k]} [ x_a^\mt \btheta^* ]$. 

\paragraph{LinUCB.} LinUCB \citep{li2010contextual, Improved_Algorithm} is a classical algorithm for linear stochastic bandit. It estimates a bandit model parameter $\hat\btheta$ using ridge regression, i.e., $\hat\btheta_t = \bA_t^{-1} \sum_{i=1}^t x_{a_i} r_i$, where $\bA_t = \sum_{i=1}^t x_{a_i} x_{a_i}^\mt + \lambda\bI$ and $\lambda$ is the L2-regularization coefficient. We use $\lVert x \rVert_{\bA} = \sqrt{x^\mt \bA x}$ to denote the matrix norm of vector $x$. The confidence bound about reward estimation on arm $x$ is defined as $\text{CB}_t(x) = \alpha_t \lVert x \rVert_{\bA_t^{-1}}$, where $\alpha_t$ is a high probability bound of $\lVert \btheta^* - \hat\btheta_t \rVert_{\bA_t}$. In each round $t$, LinUCB pulls an arm with the highest upper confidence bound, i.e., $a_t = \argmax_{a\in[k]} [x_a^\mt \hat\btheta_t + \text{CB}_t(x_a)]$ to balance the exploration-exploitation trade-off. LinUCB algorithm achieves a sublinear upper regret bound, i.e., $R_T = \tilde O(\sqrt{T})$ ignoring the logarithmic term. 

\paragraph{Threat model.} The goal of an adversary is to fool a linear stochastic bandit algorithm to pull the target arm  $\tilde x \in \mathcal{A}$ for $T-o(T)$ times. 
Like most recent works in this space~\citep{jun2018adversarial,liu2019data,garcelon2020adversarial,zhang2020adaptive}, we also consider the widely studied poisoning attack on the rewards: after arm $a_t$ is pulled by the bandit algorithm, the adversary modifies the realized reward $r_{a_t}$ from the environment by $\Delta r_t$ into $\tilde r_{a_t}$, i.e., $\tilde r_{a_t} = r_{a_t} + \Delta r_t$, and feeds the manipulated reward  $\tilde r_{a_t}$ to the algorithm. Naturally, the adversary aims to achieve its attack goal with minimum attack cost, defined as $C(T) =\sum_{t=1}^T \vert\Delta r_t\vert$. 
By convention, an attack strategy is said to be \emph{efficient}  if it uses only a sublinear total cost, i.e., $C(T) = o(T)$.  

We conclude the preliminaries with an important remark about a key difference between attacking linear stochastic bandit studied in this paper and attacking $k$-armed linear contextual bandit setting recently studied in~\citep{garcelon2020adversarial}. In linear contextual bandits, all arms share a context vector at each round but each arm has its own (to-be-estimated) bandit parameter. Therefore, the reward manipulation at a round $t$ will only affect the parameter estimation of the pulled arm $a_t$, but not any other arms'. This ``isolates'' the attack's impact in different arms. In contrast, in linear stochastic bandit, all arms share the same bandit parameter but have different context vectors. And thus manipulating the reward of any arm will alter the shared bandit parameter estimation, which will then affect the reward estimation of all arms due to the correlation among their context vectors. Such coupled effect of adversarial manipulation from the pulled arm $a_t$ to all other arms is unique in linear stochastic bandits, and makes its analysis of attack much more challenging. This is also the fundamental reason that some linear stochastic bandit environment may not be attackable (see our illustration in Example \ref{ex:attackability}).

\section{The Attackability of A Linear Stochastic Bandit Environment} \label{sec:oracle}
We study the attackability of a linear stochastic bandit \emph{environment}. At the first glance, one might wonder whether \emph{attackability} is the property of a bandit \emph{algorithm} rather than a property of the environment, since if an algorithm can be attacked, we should have ``blamed'' the algorithm for not being robust. A key finding of this work is attackability is also a property of the learning environment; and in other words, \emph{not} all environments are attackable.

\begin{definition}[Attackability of a $k$-Armed Linear Stochastic Bandit Environment]\label{def:attackable}
A $k$-armed linear stochastic bandit environment $\langle\mathcal{A} = \{x_i\}^k_{i=1}, \btheta^*\rangle$ is  attackable with respect to the target arm $\tilde{x} \in \mathcal{A}$ if for \emph{any} no-regret learning algorithm, there exists an attack method that uses $o(T)$ attack cost and fools the algorithm to pull arm $\tilde{x}$ at least $T - o(T)$ times with high probability\footnote{Typically, the high probability refers to $1-\delta$ probability for an arbitrarily small $\delta$. Please see theorems later for rigorous statements.} for any $T$ large enough, i.e., $T$ larger than a constant $T_0$. 
\end{definition} 

We make a few remarks about the above definition of attackability. First, this definition is all about the bandit environment $\langle\mathcal{A}, \btheta^*\rangle$ and the target arm $\tilde{x}$, but independent of any specific bandit algorithm. Second, if an attack method can only fool a bandit algorithm to pull the target arm $\tilde{x}$ under a few different time horizons $T$, it does not count as a successful attack -- it has to succeed for infinitely many time horizons. Third, by reversing the order of quantifiers, we obtain the assertion that a bandit environment is not attackable w.r.t. the target arm $\tilde{x}$  if \emph{there exists some no-regret learning algorithm} such that no attack method can use $o(T)$ attack cost to fool the algorithm to pull arm $\tilde{x}$ at least $T - o(T)$ times  with high probability for any $T$ large enough. 

The following simple yet insightful example illustrates that there are indeed linear stochastic bandit environment setups where some no-regret learning algorithm \emph{cannot} be attacked.  

\begin{example} [An unattackable environment]\label{ex:attackability}
Figure~\ref{fig:example} shows a three-arm environment with $\mathcal{A}=\{x_1 = (0, 1), x_2 = ( 1, 2), x_3 = (-1, 2)\}$. Suppose the target arm $\tilde x = x_1$ and the ground-truth bandit parameter $\btheta^* = (1, 1)$\footnote{One can re-scale all vectors with norms smaller than 1, e.g., divide each dimension by 10, without changing the conclusion that the environment is unattackable. }. The expected true rewards of the arms are $r_1^* = 1, r_2^*=3, r_3^* = 1$ and $x_2$ is the best arm in this environment. 

Based on Definition \ref{def:attackable}, we will need to identify a no-regret learning algorithm that   cannot be attacked in this environment, and we argue that LinUCB is such an algorithm. Suppose, for the sake of contradiction, that there exists an efficient attack which fools LinUCB to pull $x_1$ $T - o(T)$ times. LinUCB then must estimate $\btheta^*$ in the $x_1$'s direction almost accurately as $T$ becomes large, since the $\Omega(T)$ amount of true reward feedback in $x_1$ direction will ultimately dominate the $o(T)$ adversarial manipulations. This suggests that the estimated parameter $\hat\btheta_t$ will be close to  $(\alpha, 1)$ for some $\alpha$. Since $(\alpha, 1)^\mt (x_2 + x_3) = 4$, implying that either $x_2$ or $x_3$ will have its estimated reward larger than $2$ (i.e., strictly larger than $x_1$'s estimated reward) for any $\alpha$. This shows that for large $T$, $x_1$ cannot be the  arm with the highest UCB during the execution of LinUCB, which causes a contradiction. Therefore,   this environment cannot be efficiently attacked with $o(T)$ cost. Here we give an intuitive argument about this environment with target arm $\tilde x$ is not attackable, while its formal proof is an instantiation of our Theorem \ref{theorem:characterization}.

\begin{figure}[t]
    \centering
     \includegraphics[width=5cm]{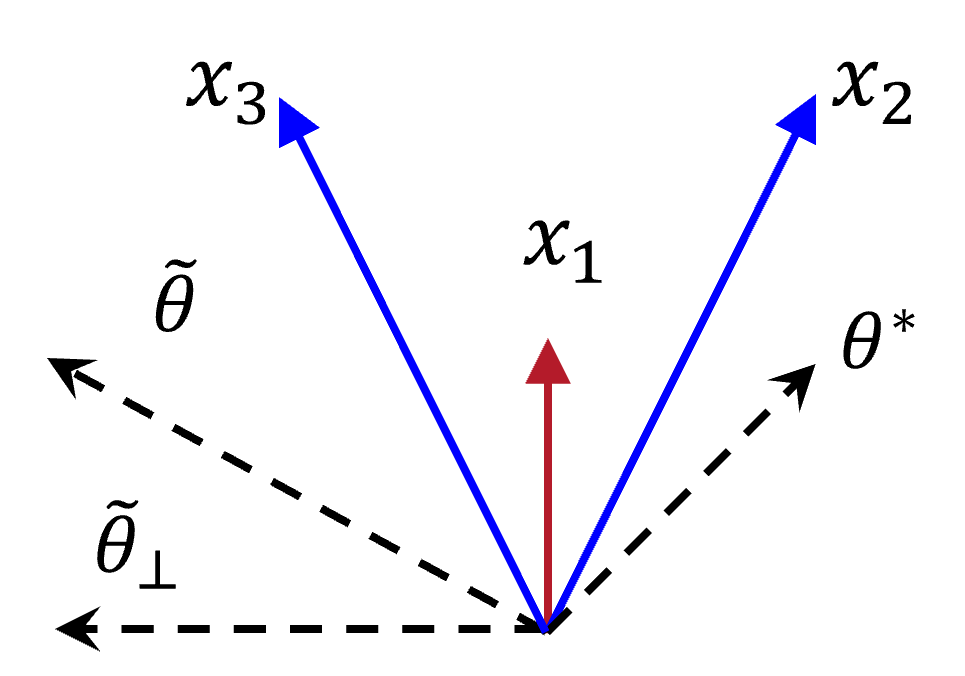}
    \caption{Illustration of attackability of a linear stochastic bandit environment.}
    \label{fig:example}
\end{figure} 

\end{example}

Note that when $\mathcal{A}=\{x_1, x_2\}$, the environment becomes attackable: as shown in   Figure~\ref{fig:example}, a feasible attack strategy is to perturb reward according to $\tilde\btheta =  (-2,1)$. The key idea is that in the null space of $x_1$, $\tilde\btheta_\perp$ reduces the reward of $x_2$ to make $x_1$ the best arm without changing the actual reward of $x_1$ from the environment.   The presence of arm $x_3$ prevents the existence of such a $\tilde\btheta_\perp$ (and therefore $\tilde\btheta$) and makes the environment unattackable.

The above example motivates us to study when a linear stochastic bandit environment is attackable. After all, only when facing an unattackable environment, we can ensure the existence of a no-regret algorithm  that will be robust to any $o(T)$ poisoning attacks. 

Next we show that there indeed exists a complete characterization about when a linear stochastic bandit environment is attackable. As Example \ref{ex:attackability} shows, the attackability of a bandit environment depends on the arm set $\mathcal{A} = \{x_i\}^k_{i=1}$, the target arm $\tilde x$, and the underlying bandit parameter $\btheta^*$. For clarity of presentation, in this section, we shall always assume that the adversary knows exactly the ground-truth bandit parameter $\btheta^*$ and thus the true expected reward of each arm. This is also called the \emph{oracle attack} in previous works~\citep{jun2018adversarial,rakhsha2020policy}. But in the next section, we will show that this assumption is not necessary: when the bandit environment is indeed attackable, we can design provably successful attacks even if the adversary does not know the underlying bandit parameter $\btheta^*$. 

We need the following convenient notation to state our results. Let \begin{equation}\label{eq:theta_parallel}
  \btheta^*_\parallel = \frac{\tilde x^\mt \btheta^*}{\lVert\tilde x\rVert^2_2}  \tilde x  
\end{equation} denote the projection of ground-truth bandit parameter $\btheta^*$ onto the targeted $\tilde x$ direction. Since the attackability depends on the target arm $\tilde{x}$ as well, we shall include the target arm $\tilde{x}$ as part of the bandit environment. The following theorem provides a clean characterization about the attackability of a linear stochastic bandit environment.  

\begin{theorem}[Characterization of Attackability]\label{theorem:characterization}
A bandit environment $\langle \mathcal{A}=\{x_i\}^k_{i=1}, \btheta^*, \tilde x \rangle$ is  attackable \emph{if and only if} the optimal objective  $\epsilon^*$ of the following convex quadratic program (CQP) 
satisfies $\epsilon^* > 0$. 
\begin{lp}\label{lp:attackability}
\maxi{\epsilon}
\st 
\qcon{   \tilde x^\mt    \btheta^*_\parallel   \geq \epsilon +   x_a^\mt  ( \btheta^*_\parallel +  \tilde\btheta_\perp)  }{ x_a \not = \tilde{x}}
\con{   \tilde x^\mt    \tilde\btheta_\perp  = 0 }
\con{   \Vert \btheta^*_\parallel +  \tilde\btheta_\perp\Vert_2 \leq 1}
\end{lp}
where  $\epsilon \in \mathbb{R} \text{ and }  \tilde\btheta_\perp \in \mathbb{R}^d$ are variables.  
\end{theorem}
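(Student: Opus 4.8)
The plan is to prove both directions of the characterization, building on the intuition already sketched in Example~\ref{ex:attackability}. The central object is a ``target parameter'' $\tilde\btheta = \btheta^*_\parallel + \tilde\btheta_\perp$: the parallel component is forced to equal $\btheta^*_\parallel$ precisely so that the target arm's true reward $\tilde x^\mt\tilde\btheta = \tilde x^\mt\btheta^*_\parallel = \tilde x^\mt\btheta^*$ is unchanged (this is why the adversary's cost can stay sublinear on $\tilde x$), while the orthogonal component $\tilde\btheta_\perp$ is free to depress the non-target arms. Feasibility of \eqref{lp:attackability} with $\epsilon^*>0$ is exactly the statement that there is such a $\tilde\btheta$ making $\tilde x$ the \emph{strict} best arm by a uniform margin $\epsilon^*$, with $\Vert\tilde\btheta\Vert_2\le 1$ so it is a legitimate bandit parameter.

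For \textbf{sufficiency}, suppose $\epsilon^*>0$ with witness $\tilde\btheta_\perp$ and set $\tilde\btheta = \btheta^*_\parallel + \tilde\btheta_\perp$. I would exhibit an explicit oracle attack against an arbitrary no-regret algorithm: on each round the pulled arm $a_t$ is fed the reward it ``would have seen'' in the environment $\langle\mathcal{A},\tilde\btheta\rangle$, i.e. the adversary sets $\tilde r_{a_t} = x_{a_t}^\mt\tilde\btheta + \eta_t = r_{a_t} + x_{a_t}^\mt(\tilde\btheta-\btheta^*)$, so $\Delta r_t = x_{a_t}^\mt(\tilde\btheta-\btheta^*) = x_{a_t}^\mt\tilde\btheta_\perp - x_{a_t}^\mt\btheta^*_{\perp}$ where $\btheta^*_\perp = \btheta^*-\btheta^*_\parallel$; crucially, when $a_t=\tilde x$ this perturbation is $\tilde x^\mt(\tilde\btheta-\btheta^*) = -\tilde x^\mt\btheta^*_\perp$... wait — this is generally nonzero. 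The cleaner choice is to only perturb when $a_t\neq\tilde x$ and leave $\tilde x$'s reward untouched, which is consistent because $\tilde x^\mt\tilde\btheta = \tilde x^\mt\btheta^*$ means the algorithm cannot distinguish the two environments from feedback on $\tilde x$. Under this attack the algorithm effectively runs in the environment $\langle\mathcal{A},\tilde\btheta\rangle$ where $\tilde x$ is the unique optimal arm with gap $\ge\epsilon^*$ to every other arm; since the algorithm is no-regret \emph{in that environment}, its regret there is $o(T)$, hence it pulls every suboptimal arm $o(T/\epsilon^*) = o(T)$ times, so $\tilde x$ is pulled $T-o(T)$ times. The cost is $\sum_{t: a_t\neq\tilde x}|x_{a_t}^\mt(\tilde\btheta-\btheta^*)| \le 2\,(\#\{t:a_t\neq\tilde x\}) = o(T)$ since each $|\Delta r_t|\le \Vert x_{a_t}\Vert\,\Vert\tilde\btheta-\btheta^*\Vert\le 2$. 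One subtlety to address carefully: a no-regret algorithm's guarantee is against the \emph{realized} rewards it receives; I must argue that feeding it rewards distributed as in $\langle\mathcal{A},\tilde\btheta\rangle$ genuinely triggers its $o(T)$ regret bound \emph{relative to the optimal arm of $\tilde\btheta$}, which is what the definition of no-regret supplies.

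For \textbf{necessity}, I prove the contrapositive: if $\epsilon^*\le 0$ then some no-regret algorithm (e.g.\ LinUCB) cannot be efficiently attacked, generalizing the argument in Example~\ref{ex:attackability}. Suppose toward contradiction that an attack with cost $C(T)=o(T)$ forces LinUCB to pull $\tilde x$ at least $T-o(T)$ times along an infinite sequence of horizons. Because $\Omega(T)$ genuine $R$-sub-Gaussian samples accumulate in the $\tilde x$-direction while the total adversarial perturbation is only $o(T)$, ridge regression forces the component of $\hat\btheta_t$ along $\tilde x$ to converge to $\tilde x^\mt\btheta^*/\Vert\tilde x\Vert_2^2$, i.e.\ $\hat\btheta_{t,\parallel}\to\btheta^*_\parallel$; write $\hat\btheta_t = \btheta^*_\parallel + \hat\btheta_{t,\perp} + o(1)$ for some $\hat\btheta_{t,\perp}\perp\tilde x$, and also $\Vert\hat\btheta_t\Vert_2\lesssim 1$ up to the confidence radius. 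For $\tilde x$ to have the maximal UCB infinitely often we would need, for infinitely many $t$, that $\tilde x^\mt\hat\btheta_t + \mathrm{CB}_t(\tilde x) \ge x_a^\mt\hat\btheta_t + \mathrm{CB}_t(x_a)$ for all $a$; since $\tilde x$ is pulled linearly often its confidence width $\mathrm{CB}_t(\tilde x)\to 0$, and the net effect is that $\hat\btheta_{t,\perp}$ would furnish a feasible point of \eqref{lp:attackability} with objective value approaching a positive number — contradicting $\epsilon^*\le 0$. Making this rigorous requires (i) a concentration/martingale argument bounding how far $o(T)$ perturbations can push the ridge estimate in the $\tilde x$ direction, and (ii) controlling the confidence terms $\mathrm{CB}_t(x_a)$ for the non-target arms, which need \emph{not} shrink; I would handle (ii) by passing to a subsequence and taking limits of the (bounded) vectors $\hat\btheta_{t,\perp}$ and of the normalized quantities $\mathrm{CB}_t(x_a)\ge 0$, so that in the limit the UCB inequalities become exactly the constraints of \eqref{lp:attackability} with $\epsilon$ equal to the limiting slack, which must then be $>0$, the contradiction.

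The main obstacle is the necessity direction, specifically step (i)--(ii) above: quantifying precisely that an $o(T)$-cost perturbation cannot corrupt the $\tilde x$-directional estimate by more than $o(1)$, and simultaneously arguing that the non-target arms' confidence bounds cannot ``rescue'' $\tilde x$'s UCB lead in the limit. The sufficiency direction is comparatively routine once the reduction ``attack $=$ simulate the environment $\langle\mathcal{A},\tilde\btheta\rangle$'' is set up, modulo the care needed to invoke the no-regret guarantee against the \emph{simulated} optimal arm and to check the per-round cost bound $|\Delta r_t|\le 2$ and hence $C(T)=o(T)$.
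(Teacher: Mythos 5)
Your sufficiency argument is essentially the paper's: build $\tilde\btheta=\btheta^*_\parallel+\tilde\btheta_\perp$, leave the target arm's rewards untouched, replace every non-target reward by one drawn as if from the environment $\tilde\btheta$, and invoke the no-regret guarantee in that simulated environment together with the gap $\epsilon^*>0$ and a constant per-round cost bound. (Your momentary worry that $\tilde x^\mt(\tilde\btheta-\btheta^*)\neq 0$ is unfounded: $\tilde x^\mt\btheta^*_\perp=0$ by definition of the projection, so the two choices coincide in expectation on $\tilde x$; your eventual choice matches the paper's.) This direction is fine.

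The necessity direction follows the paper's strategy (LinUCB with a suitably chosen $\lambda$ is the robust algorithm; the UCB inequality at rounds where $\tilde x$ is pulled, plus convergence of $\hat\btheta_{t,\parallel}$ to $\btheta^*_\parallel$, yields a feasible point of CQP \eqref{lp:attackability}), but your proposed resolution of the key difficulty does not work. Passing to a subsequence and taking limits turns the strict per-round UCB inequality into a non-strict one: non-target arms are in general still pulled $\omega(1)$ times, so $\mathrm{CB}_t(x_a)\to 0$ as well, and the ``limiting slack'' you assert ``must then be $>0$'' is typically $0$. A feasible point with objective value $0$ does not contradict $\epsilon^*\le 0$, so the contradiction evaporates exactly at the step you flagged as the main obstacle. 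What the paper supplies instead---and what your plan is missing---is a quantitative \emph{lower} bound on the exploration bonus of an under-pulled arm at finite $t$: if $x_a\nparallel\tilde x$ and non-target arms have been pulled only $m=o(t)$ times, then $\mathrm{CB}_t(x_a)\ge \alpha_t\bigl(1/\sqrt{m+\lambda}-b/\sqrt{t-m+\lambda}\bigr)$ (Lemma~\ref{lemma:cb_lowerbound}), which strictly dominates both $\mathrm{CB}_t(\tilde x)=O\bigl(\alpha_t/\sqrt{t-o(t)}\bigr)$ and the estimation error of the parallel component from Lemma~\ref{claim:theta_parallel}. This makes $\tilde x^\mt\btheta^*_\parallel>x_a^\mt(\btheta^*_\parallel+\hat\btheta_{t,\perp})$ \emph{strict at a single finite large $t$}, and that one $\hat\btheta_{t,\perp}$ is the required feasible point with positive objective; no limit of the $\hat\btheta_{t,\perp}$ is ever taken. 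The paper also handles two points you gloss over: arms parallel to $\tilde x$, where the lower-bound lemma does not apply, are treated by a separate direct argument, and the norm constraint of the CQP is enforced by choosing the ridge regularizer so that $\Vert\hat\btheta_t\Vert_2<1$.
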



Since the conceptual message of Theorem \ref{theorem:characterization} significantly differs from  previous studies on adversarial attacks in bandit algorithms, we would like to elaborate on its implications. 

First of all, we, for the first time, point out some learning environment is intrinsically robust. Even the vanilla LinUCB algorithm, as we will analyze in the proof of Theorem \ref{theorem:characterization}, cannot be efficiently attacked when CQP \eqref{lp:attackability} is not satisfied. Notably, although almost all previous works have focused on the vulnerability of bandit algorithms, e.g., by designing attacks against UCB, $\epsilon$-Greedy \citep{jun2018adversarial}, LinUCB \citep{garcelon2020adversarial}, it just so happens  that they were already studied under an attackable environment (see our Corollary \ref{cor-MAB}). To our best knowledge, the problem about the intrinsic robustness of a linear bandit environment has not been studied before and can be viewed as a complement to these previous works. Second, as we will show next, our proof techniques are also significantly different from existing ones, since what is central to our proof is to demonstrate that when CQP \eqref{lp:attackability} is not satisfied, there will exist a robust algorithm that cannot be efficiently attacked by \emph{any} adversary. This can be viewed as analyzing the robustness of certain bandit algorithms when $\epsilon^*\leq 0$ in CQP \eqref{lp:attackability}.  

Since CQP \eqref{lp:attackability} and its solutions  will show up very often in our later analysis, we provide the following definition for reference convenience. 

\begin{definition}[Attackability Index and Certificate]
The optimal objective $\epsilon^*$ of  CQP \eqref{lp:attackability} is called the \emph{attackability index} and the optimal solution $\tilde\btheta_\perp$ to CQP \eqref{lp:attackability} is called the \emph{attackability certificate.}\footnote{We sometimes omit ``attackability'' when it is clear from the context, and simply mention \emph{index} and \emph{certificate}.}
\end{definition} 
We should note both the index $\epsilon^*$ and certificate $\tilde\btheta_\perp$ are intrinsic to the bandit environment $\langle \mathcal{A}=\{x_i\}^k_{i=1}, \btheta^*, \tilde x \rangle$, and are irrelevant to any bandit algorithms used. As we will see  in the next section when designing  attack algorithms \emph{without} the knowledge of $\btheta^*$, the index $\epsilon^*$ will determine how difficult it is to attack the environment. 

\begin{algorithm}[h]
    \caption{Oracle Null Space Attack}\label{alg:oracleattack}
    \begin{algorithmic}[1]
    \STATE \textbf{Inputs:} $T, \btheta^*$
    \STATE \textbf{Initialize:}
    \IF[Attackability Test]{Optimal objective $\epsilon^*$ of CQP \eqref{lp:attackability} $>0$ } 
        \STATE Find the optimal solution $\tilde\btheta_\perp$
        \STATE Set $\tilde\btheta = \btheta^*_{\parallel} + \tilde\btheta_\perp$
        \ELSE
        \STATE \textbf{return} Not attackable
        \ENDIF
    \FOR{ $t=1$ to $T$}	
    \STATE Bandit algorithm pulls arm $a_t$
    \STATE Attacker observes the corresponding reward $r_t = x_{a_t}^\mt\btheta^* + \eta_t$ from the environment
    \IF{ $x_{a_t} \neq \tilde x$}
        \STATE Set $\tilde r_t = x_{a_t}^\mt \tilde\btheta + \tilde \eta_t$  \COMMENT{Attack}
    \ELSE
        \STATE Set $\tilde r_t = r_t$
        \ENDIF
    \STATE Bandit algorithm observes modified reward $\tilde r_t$
    \ENDFOR
    \end{algorithmic}
\end{algorithm}

\begin{proof}[Proof of Theorem \ref{theorem:characterization}]

{\bf Proof of sufficiency.} This direction is relatively straightforward. We show that there exists an efficient attack strategy if CQP \eqref{lp:attackability} holds.

Suppose the attackability index $\epsilon^*>0$ and let $\tilde\btheta_\perp$ be a certificate. In Algorithm~\ref{alg:oracleattack}, we design the \textbf{oracle null space attack}  based on the knowledge of bandit parameter $\btheta^*$.  Let $\tilde\btheta = \btheta_\parallel^* + \tilde \btheta_\perp$ where $\btheta_\parallel^*$ is defined in Eq~\eqref{eq:theta_parallel}. The adversary perturbs the reward of any non-target arm $x_a \not = \tilde{x}$ by $\tilde r_{a, t} = x_a^\mt\tilde\btheta + \tilde \eta_t$, 
whereas the reward of the target arm $\tilde{x}$ is \emph{not} touched. In other words, the adversary misleads the algorithm to believe that $\tilde\btheta$ is the ground-truth parameter We should note both $\tilde\btheta$ and $\btheta^*$ generate the same expected reward on $\tilde{x}$, i.e., $\tilde x^\mt  \tilde\btheta = \tilde x^\mt  \btheta_\parallel^* = \tilde x^\mt  \btheta^*$. To make the attack appear less ``suspicious'', a sub-Gaussian noise term $\tilde \eta_t$ is added to the perturbed reward to make it stochastic. The key idea is that the attacker does not need to perturb the reward of the target arm because the original reward is the same as perturbed reward in expectation. Instead, the attacker only perturbs the reward in the \emph{null space} of $\tilde x$ according to $\tilde\btheta_\perp$, which guarantees the cost-efficiency of the attack. 

Since the perturbed rewards observed by the bandit algorithm are generated by  $\tilde\btheta$, the target arm $\tilde x$ is the optimal arm in this environment due to the attackability index $\epsilon^*$ being strictly positive. According to the definition, any no-regret bandit algorithm will only pull suboptimal arms, i.e., the non-target arms, $o(T)$ times and pull target arm $T-o(T)$ times with high probability. Thus the attack is successful. Moreover,  the cost of oracle attack is $o(T)$ because the attacker only perturbs rewards on the non-target arms for $o(T)$ times, and the cost on each attack is bounded by a constant (because of the finite norm of $x_a$ and $\btheta^*$). Importantly, we note that this argument only relies on the definition of ``no regret'' but does not depend on what the algorithm is. This is crucial for proving the sufficiency of attackability.

\noindent{\bf Proof of necessity.} This is the more difficult direction. Due to space limit, we only provide the proof sketch here while leave the involved technical arguments to Appendix \ref{append:main-proof}.
We shall prove that if $\epsilon^* \leq 0$, the bandit environment is not attackable. To do so, we need to identify at least one no-regret bandit algorithm such that no attack strategy can successfully attack it. We argue that even the vanilla LinUCB is already robust to any attack strategy with $o(T)$ cost when $\epsilon^*\leq 0$.
Recall that LinUCB maintains an estimate  $\hat\btheta_t$ at round $t$ using the attacked rewards $\{\tilde r_{1:t}\}$. We consider LinUCB with the choice of L2-regularization parameter $\lambda$ that guarantees $\Vert\hat\theta_t\Vert_2 < 1$ in order to satisfy the last constraint in CQP \eqref{lp:attackability}. Consider the decomposition $\hat\btheta_{t} = \hat\btheta_{t,\parallel} + \hat\btheta_{t,\perp}$, where $\tilde x \perp \hat\btheta_{t, \perp} $ and $ \tilde x \parallel \hat\btheta_{t,\parallel} $. 

Suppose, for the sake of contradiction, that LinUCB is attackable when $\epsilon^*\leq 0$. According to Definition \ref{def:attackable}, the target arm $\tilde x$ will be pulled $T-o(T)$ times with high probability for infinitely many different time horizons $T$. Fix any large $T$; we know that $\tilde x$ must have the largest UCB score whenever it is pulled at some round $t\in[T]$, or formally, for any $x_a \not = \tilde{x}$ we must have the following:  
\begin{equation}\label{eq:arm_selection}
\tilde x^\mt   \hat\btheta_{t,\parallel} + \text{CB}_t(\tilde x) \geq  x_a^\mt   \hat\btheta_{t,\parallel} + x_a^\mt   \hat\btheta_{t,\perp} + \text{CB}_t(x_a). 
 \end{equation}
By attackability, we know that the above inequality  will hold for infinitely many $t$s. Our main idea to construct the proof is that as $t\to \infty$, we have $\text{CB}_t(\tilde x) \to 0$ and $\text{CB}_t(x_a) > 0$. Moreover, the estimation of $\hat\btheta_{t,\parallel}$ will converge to $\btheta^*_{\parallel}$, since   $\tilde{x}$ will be pulled for $t - o(t)$ times. The key challenge is to show $ \text{CB}_t(x_a) - \text{CB}_t(\tilde x)$, due to Inequality \eqref{eq:arm_selection},  is \emph{strictly} greater than $0$ for all large $t$. To do so, we prove a $\Theta\left(\sqrt{\frac{\log(t/\delta)}{o(t)}}\right)$ \emph{lower} bound for  $\text{CB}_t(x_a)$  (Lemma \ref{lemma:cb_lowerbound} in Appendix \ref{append:main-proof}) and  an $O(\sqrt{\frac{\log(t/\delta)}{t}})$  \emph{upper} bound for  $\text{CB}_t(\tilde x)$  (Lemma \ref{lemma:cb}). The main technical barrier we overcome is the lower bound proof for the confidence bound term, which employs non-standard arguments since most (if not all) of the bandit algorithm analysis only needs the upper bound of the confidence bound terms. Due to this reason, we believe this technical proof is of independent interest, particularly for the analysis of robust properties of linear bandit algorithms.

By letting $t \to \infty$, we obtain the following condition:
\begin{align} 
\quad & \tilde x^\mt    \btheta^*_{\parallel}  > x_a^\mt    \btheta^*_{\parallel}  + x_a^\mt   \hat\btheta_{t,\perp}, \forall   x_a \not = \tilde{x}.
 \end{align}
This implies that for any sufficiently large $t$, there must exist a $\hat\btheta_{t,\perp}$ that 
and makes the optimal objective of CQP \eqref{lp:attackability} $\epsilon^*$ positive. But this contradicts the starting assumption of $\epsilon^*\le0$; hence, the bandit environment is not attackable. 
\end{proof} 

We now provide an intuitive explanation about Theorem \ref{theorem:characterization}. CQP \eqref{lp:attackability} is to find $\tilde\btheta_\perp $ such that: 1) it is orthogonal to $\tilde{x}$ (hence its subscript); and 2) it maximizes the gap $\epsilon$ between  $ \tilde x^\mt    \btheta^*_\parallel$  and the largest $x_a^\mt  ( \btheta^*_\parallel + \tilde\btheta_\perp)$ among all $x_a \not = \tilde{x}$. Theorem \ref{theorem:characterization} states that  the bandit environment is attackable \emph{if and only if}   such a gap 
is strictly larger than $0$, i.e., when the \emph{geometry of context vectors} allows the adversary to lower non-target arms' rewards in the null space of $\tilde x$. 
The constraint $ \Vert \btheta^*_\parallel +  \tilde\btheta_\perp\Vert_2 \leq 1$ ensures the attacked rewards are in the same scale as the unattacked rewards. 

Recent works have shown that any no-regret algorithm for the context-free $k$-armed setting (where arm set $\mathcal{A}$ is orthonormal) can always be attacked \citep{liu2019data}. This finding turns out to be a corollary of Theorem \ref{theorem:characterization}. 

\begin{corollary}\label{cor:k-arm}
\label{cor-MAB}
For standard stochastic multi-armed bandit setting where arm set $\mathcal{A}$ is orthonormal, the environment $\langle \mathcal{A}=\{x_a\}, \btheta^*, \tilde x \rangle$ is  attackable for any target arm $\tilde{x}$. 
\end{corollary}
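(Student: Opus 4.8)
The plan is to verify that CQP \eqref{lp:attackability} has a strictly positive optimal objective $\epsilon^*$ for the orthonormal arm set, and then invoke Theorem \ref{theorem:characterization} to conclude attackability. Since $\mathcal{A}$ is orthonormal, we may take $\{x_a\}_{a=1}^k$ to be (a subset of) the standard basis vectors in $\mathbb{R}^d$ with $d = k$, and the target arm $\tilde x = x_{\tilde a}$ is one of these basis vectors. The key simplification is that orthonormality decouples everything: the projection $\btheta^*_\parallel = (\tilde x^\mt \btheta^*)\,\tilde x = \btheta^*_{\tilde a}\,\tilde x$ is just the $\tilde a$-th coordinate of $\btheta^*$ placed in the $\tilde a$-th slot, so $\tilde x^\mt \btheta^*_\parallel = \btheta^*_{\tilde a}$, while for any non-target basis arm $x_a$ we have $x_a^\mt \btheta^*_\parallel = 0$ because $x_a \perp \tilde x$.

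First I would exhibit an explicit feasible point of the CQP and compute its objective value. The natural choice is to pick $\tilde\btheta_\perp$ so that $\btheta^*_\parallel + \tilde\btheta_\perp$ assigns a sufficiently negative value to every non-target coordinate while leaving the $\tilde a$-th coordinate equal to $\btheta^*_{\tilde a}$. Concretely, set the $a$-th coordinate of $\tilde\btheta_\perp$ to some $-c_a \le 0$ for $a \ne \tilde a$ and the $\tilde a$-th coordinate to $0$; this automatically satisfies $\tilde x^\mt \tilde\btheta_\perp = 0$. Then $x_a^\mt(\btheta^*_\parallel + \tilde\btheta_\perp) = -c_a$, so the first family of constraints becomes $\btheta^*_{\tilde a} \ge \epsilon - c_a$, i.e. $\epsilon \le \btheta^*_{\tilde a} + c_a$ for all $a \ne \tilde a$. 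Choosing, say, each $c_a$ large enough (but respecting the norm constraint), we can certainly make $\epsilon = \btheta^*_{\tilde a} + \min_{a\ne\tilde a} c_a > 0$; in fact even $c_a$ tiny already gives $\epsilon$ strictly positive provided $\btheta^*_{\tilde a} \ge 0$, and if $\btheta^*_{\tilde a}$ is negative one just takes $c_a > |\btheta^*_{\tilde a}|$. To satisfy $\Vert \btheta^*_\parallel + \tilde\btheta_\perp\Vert_2 \le 1$, note $\Vert \btheta^*_\parallel + \tilde\btheta_\perp\Vert_2^2 = (\btheta^*_{\tilde a})^2 + \sum_{a\ne\tilde a} c_a^2$, so one can take all $c_a$ equal to a common small value $c > 0$ with $(\btheta^*_{\tilde a})^2 + (k-1)c^2 \le 1$; since $\Vert\btheta^*\Vert_2 \le 1$ forces $|\btheta^*_{\tilde a}| \le 1$, if $|\btheta^*_{\tilde a}| < 1$ a small enough $c$ works, and the boundary case $|\btheta^*_{\tilde a}| = 1$ can be handled by a rescaling remark analogous to the footnote in Example \ref{ex:attackability} (or by noting $\btheta^*_{\tilde a}=1$ already makes $\tilde x$ optimal with a degenerate but valid certificate).

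Having produced a feasible point with objective bounded below by a positive constant, the optimal objective $\epsilon^*$ is at least that positive constant, hence $\epsilon^* > 0$, and Theorem \ref{theorem:characterization} immediately yields that $\langle\mathcal{A},\btheta^*,\tilde x\rangle$ is attackable. I do not anticipate a genuine obstacle here — the proof is essentially a one-line consequence of the characterization once orthonormality is used to decouple the coordinates. The only mild subtlety worth a sentence of care is the edge case where $|\btheta^*_{\tilde a}|$ is exactly $1$ (so the norm budget is fully consumed by the parallel component and no nonzero $\tilde\btheta_\perp$ fits), which is dispatched by the standard rescaling observation that shrinking all context vectors and $\btheta^*$ by a constant does not change attackability; alternatively one observes this case cannot occur for any genuinely non-degenerate environment. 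I would also add the one-sentence remark connecting this back to the cited works \citep{jun2018adversarial,liu2019data}, emphasizing that their vulnerability results are recovered as the orthonormal special case of our geometric characterization.
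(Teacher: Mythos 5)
Your approach is the same as the paper's: exhibit an explicit feasible point of CQP \eqref{lp:attackability} that decouples coordinate-wise under orthonormality, then invoke Theorem \ref{theorem:characterization}. The paper's own proof takes $\tilde\btheta_\perp = -C\sum_{x_a\neq\tilde x}x_a$ for a ``large enough'' constant $C$ and stops there; you go further by actually checking the norm constraint $\Vert\btheta^*_\parallel+\tilde\btheta_\perp\Vert_2\le 1$, which the paper's one-line argument silently drops (a large $C$ violates it). In that respect your write-up is more careful than the published proof.

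That extra care, however, exposes a gap that your proposed fixes do not close and that the paper's proof shares. If $\tilde x^\mt\btheta^*<0$, then feasibility with $\epsilon>0$ forces every non-target coordinate $x_a^\mt\tilde\btheta$ of $\tilde\btheta=\btheta^*_\parallel+\tilde\btheta_\perp$ to lie strictly below $\tilde x^\mt\btheta^*$, hence to have magnitude exceeding $|\tilde x^\mt\btheta^*|$; the norm constraint then requires $k\,(\tilde x^\mt\btheta^*)^2<1$, i.e.\ $\tilde x^\mt\btheta^*>-1/\sqrt{k}$. So the construction fails not only at the boundary $|\tilde x^\mt\btheta^*|=1$ that you flag, but on the entire range $\tilde x^\mt\btheta^*\le -1/\sqrt{k}$, where CQP \eqref{lp:attackability} is genuinely infeasible for positive $\epsilon$ and Theorem \ref{theorem:characterization} declares the environment \emph{un}attackable. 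Rescaling does not rescue this: it changes the environment rather than proving attackability of the given one, and the ``degenerate certificate'' remark only covers $\tilde x^\mt\btheta^*=+1$. The corollary is therefore correct only under an implicit restriction such as $\tilde x^\mt\btheta^*\ge 0$ (or with the norm constraint relaxed, matching the unbounded-perturbation model of \citep{jun2018adversarial,liu2019data}); if you state such a restriction explicitly, your argument is complete and strictly tighter than the paper's.
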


\begin{proof}
Since arms are orthogonal to each other, it is easy to see that  $\tilde\btheta_\perp = -C[\sum_{x_a: x_a \not = \tilde{x}} x_a]$  achieves objective value $C-  \tilde x^T    \btheta^*_\parallel $ in CQP \eqref{lp:attackability}.  Let $C$ be a large enough positive constant such that the objective value is positive gives us a feasible $\tilde\btheta_\perp$  to CQP \eqref{lp:attackability}, which yields the corollary.
\end{proof}

The intuition behind this corollary is that arms in context-free stochastic multi-armed bandits are independent, and an adversary can arbitrarily lower the rewards of non-target arms to make the target arm optimal. This is also the attack strategy in~\citep{jun2018adversarial,liu2019data}.   \citet{garcelon2020adversarial} showed that similar idea works for $k$-armed linear contextual bandits where each arm is associated with an unknown bandit parameter and the reward estimations are independent among different arms. 

We should point an important distinction between poisoning attacks to $k$-armed bandits and another line of research on \emph{stochastic bandits under adversarial corruption} initiated by \citet{lykouris2018stochastic}. For poisoning attacks considered in this paper, the adversary manipulates the realized rewards \emph{after} the algorithm selects an action, whereas in \citep{lykouris2018stochastic}, the adversary manipulates the entire reward vector \emph{before} the algorithm selects any action. Obviously, the later threat model is strictly weaker and has led to various bandit algorithms that can have sublinear regret so long as the total manipulation is sublinear in $T$  \citep{lykouris2018stochastic,zimmert2021tsallis}. 
\section{Effective Attacks without  Knowledge of True Model Parameters}\label{sec:twostage}

In the previous section, we characterized the attackability of a linear stochastic bandit environment by assuming the knowledge of ground-truth bandit parameter $\btheta^*$. We now show that such prior knowledge is not needed when designing practical attacks. Towards this end, we propose provably effective attacks against two representative bandit algorithms: the most well-known LinUCB and a state-of-the-art robust linear stochastic bandit algorithm, Robust Phase Elimination \citep{bogunovic2021stochastic}. We remark that the optimal attacks to these algorithms depend on the characteristics of algorithms themselves and are generally different, due to their different levels of robustness. 
This also resonates the important message mentioned in the introduction, i.e., the attackability analysis often goes hand-in-hand with the understanding of robustness of different algorithms, as reflected in various parts of our analysis.  
However, we point out that it is an intriguing open question to understand whether there is a single attack strategy that can manipulate any no-regret algorithm in an attackable environment.

\vspace{-1mm}
\paragraph{Two-stage Null Space Attack.}  
Our proposed attack method is presented in Algorithm~\ref{alg:attack}. The method spends the first $T_1$ rounds as the first stage to attack rewards on all arms by imitating a bandit environment $\btheta_0$, in which $\tilde{x}$ is the best arm such that arm $\tilde{x}$ will be pulled most often by the bandit algorithm. This stage is for the adversary to observe the true rewards of $\tilde{x}$ from the environment, which helps it estimate the parameter $\btheta^*_{\parallel}$. At round $T_1$, the method uses the estimate of $\btheta^*_{\parallel}$, denoted as $\tilde\btheta_{\parallel}$, to perform the ``attackability test'' by solving CQP \eqref{lp:attackability} using  $\tilde\btheta_{\parallel}$ to obtain an estimated index $\tilde \epsilon^*$ and certificate $\tilde\btheta_{\perp}$. 
If $\tilde \epsilon^* > 0$, the method asserts the environment is attackable and starts the second stage of attack. From $T_1$ to $T$, the method perturbs the reward of \emph{non-target arms}  by $\tilde{r}_t = x_{a_t}^\mt  (\tilde\btheta_{\parallel} + \tilde\btheta_{\perp}) + \tilde \eta_t$ (just like the oracle attack but using the estimated $\tilde\btheta_{\parallel}$). When the bandit algorithm pulls the target arm $\tilde x$ for the first time in the second stage, the method will compensate the reward of $\tilde x$ as shown in line 20, where $n(\tilde x)$ is the number of times target arm is pulled before $T_1$. The goal is to correct the rewards on $\tilde x$ collected in the first stage to follow $\tilde\btheta$ instead of $\btheta_0$. 
Note that a larger $T_1$ brings in more observations on $\tilde{x}$ to improve the estimate of $\btheta^*_\parallel$; but it also means a higher attack cost. The optimal choice of $T_1$ depends on the ``robustness'' of the bandit algorithm to be attacked. Consequently, it also leads to different attack cost for different algorithms. 
For example, as we will show next,  the attack to Robust Phase Elimination will be more costly than the attack to LinUCB.

\begin{algorithm}[t]
    \caption{Two-stage Null Space Attack} \label{alg:attack}
    \begin{algorithmic}[1]
    \STATE \textbf{Inputs:} $T, T_1$
    \STATE $\btheta_0 = \arg \max_{\Vert\btheta\Vert_2\leq 1} \Big[ \tilde x^\mt   \btheta - \max_{x_a \not = \tilde{x}} x_a^\mt  \btheta \Big]$, let $\epsilon^*_0$ be its optimal objective 
    \IF[Initial attackability test]{$\epsilon^*_0 \leq 0$}   
      \STATE   \textbf{return} Not attackable
    \ENDIF
    \FOR{ $t=1$ to $T_1$}	
        \STATE Set $\tilde r_t = x_{a_t}^\mt   \btheta_0 + \tilde\eta_t$
        \COMMENT{Attack as if $\tilde x$ is the best}
        \STATE Bandit algorithm observes modified reward $\tilde r_t$
    \ENDFOR

        \STATE Estimate $\tilde\btheta_\parallel =   \frac{\sum_{i=1}^{n(\tilde x)} r_i(\tilde x)}{n(\tilde x)\Vert\tilde x\Vert_2^2}\tilde x $  
        \STATE Solve CQP \eqref{lp:attackability} using $\tilde\btheta_\parallel$ to obtain the estimated attackability index $\tilde\epsilon^*$ and certificate $\tilde{\btheta}_{\perp}$ 
        \IF[Attackability test]{$\tilde\epsilon^*\leq 0$} 
        \STATE \textbf{return} Not attackable
    \ELSE[Attack stage] 
        \STATE Set $\tilde\btheta = \tilde\btheta_\parallel+\tilde\btheta_\perp$
    \FOR{$t=T_1+1$ to $T$ }
    \IF{ $x_{a_t} \neq \tilde x$}  
            \STATE Set $\tilde r_t = x_{a_t}^\mt \tilde\btheta + \tilde\eta_t$
        \ELSIF{$x_{a_t} = \tilde x$ for the first time}
        \STATE Set $\tilde r_t = n(\tilde x) \times \tilde x^\mt  (\tilde\btheta - \btheta_0) + \tilde x^\mt \tilde\btheta + \tilde\eta_t$ 
        \ELSE
        \STATE Set $\tilde r_t = r_t$
        \ENDIF
        \STATE Bandit algorithm observes modified reward $\tilde r_t$ 
    \ENDFOR
    \ENDIF
    \end{algorithmic}
\end{algorithm}

Note that our attackability test might make both false positive and false negative assertions due to the estimation error in  $\tilde\btheta_\parallel$. But as $T$ becomes larger, the estimate gets more accurate and the assertion is correct with   high probability.

\begin{remark} We acknowledge that the rewards from the two stages follow different reward distributions and could be detected, e.g., using some homogeneity test \citep{li2021unifying}. Thus a bandit player could realize the attack if equipped with some change detector. However, attacking such a cautious bandit algorithm is beyond the scope of this paper.  
Moreover, it is very difficult (if not impossible) to attack with a stationary reward distribution or undetectable perturbations (e.g., using a fixed target parameter $\tilde\theta$). We could easily find cases where the adversary's parameter $\tilde \theta$ is limited to extremely few choices and it is almost impossible to directly start the attack with a valid $\tilde \theta$ without knowing $\theta^*$. For example, if we change the third arm in Example 1 to $x_3 = (-1+\epsilon, 0)$ with a small $\epsilon$, we can see that the valid parameters are only in a small range around $\tilde \theta = (-1-\epsilon, 1)$. Therefore, in order to attack with a stationary reward distribution, the adversary needs to start from somewhere very close to $\tilde \theta = (-1-\epsilon, 1)$, which we believe is extremely difficult without knowing $\theta^*$. Overall, we think designing an attack strategy against a bandit algorithm with reward change detector or showing the inability to attack such cautious algorithms is an important future work of ours. 
\end{remark}

\paragraph{Attack against LinUCB.}  We now show how LinUCB algorithm can be attacked by Algorithm~\ref{alg:attack}.


\begin{theorem}\label{theorem:attackLinUCB}
For large enough $T_1$,  the attack strategy in Algorithm \ref{alg:attack}  will correctly assert the attackability with probability at least $1-\delta$. Moreover, when the environment is attackable, with probability at least $1-3\delta$ the attack strategy  will fool LinUCB to pull  non-target arms at most
\begin{equation*}
\begin{split}
O\Big(&d\big(\sqrt{\log(T/\delta)} + \sqrt{T_1}\log{(T_1/\delta)} \\&+ \sqrt{T\log(1/\delta)}/\sqrt{T_1}\big)\sqrt{T\log (T/\delta)}/\epsilon^*\Big)       
\end{split}
\end{equation*} 
rounds. And with probability at least $1-4\delta$, the adversary's cost is at most
\begin{equation*}
\begin{split}
O\Big(T_1 + &d\big(\sqrt{\log(T/\delta)} + \sqrt{T_1}\log{(T_1/\delta)} \\&+ \sqrt{T\log(1/\delta)}/\sqrt{T_1}\big)\sqrt{T\log (T/\delta)}/\epsilon^*\Big).
\end{split}
\end{equation*}
Specifically, when $T_1 = T^{1/2}$, the strategy gives the minimum attack cost in the order of $\tilde O(T^{3/4})$, 
and the non-target arms are pulled at most $\tilde O(T^{3/4})$ rounds. 
\end{theorem}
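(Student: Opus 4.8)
The plan is to reduce the analysis of the second stage to that of running LinUCB in an environment that, from the algorithm's viewpoint, behaves almost exactly like the surrogate linear model $\tilde\btheta = \tilde\btheta_\parallel + \tilde\btheta_\perp$ --- for which $\tilde x$ is optimal by a margin $\tilde\epsilon^* > 0$ --- and then charge the remaining discrepancy to a controlled ``corruption budget.'' Concretely I would proceed in three blocks: (i) show that the first stage yields $n(\tilde x) = \Theta(T_1)$ genuine environment samples of $\tilde x$, hence an accurate estimate $\tilde\btheta_\parallel$; (ii) verify that the compensation in line~20 of Algorithm~\ref{alg:attack} makes LinUCB's entire history on $\tilde x$ consistent with $\tilde\btheta$ up to a tiny bias; (iii) run a LinUCB-under-corruption regret argument in the second stage with surrogate model $\tilde\btheta$ and margin $\tilde\epsilon^*$.

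For block~(i) and the attackability-assertion claim: since the initial test guarantees $\epsilon_0^* > 0$, arm $\tilde x$ is strictly optimal under $\btheta_0$ with gap $\ge \epsilon_0^*$, so LinUCB's no-regret property over the first $T_1$ rounds forces the number $N_1$ of non-target pulls in stage~1 to satisfy $N_1 \le R_{T_1}/\epsilon_0^* = \tilde O(\sqrt{T_1})$ on LinUCB's confidence event, hence $n(\tilde x) = T_1 - N_1 = \Theta(T_1)$. Because the adversary records the untouched environment rewards on $\tilde x$, a sub-Gaussian concentration bound gives $\Vert\tilde\btheta_\parallel - \btheta^*_\parallel\Vert_2 = O(\sqrt{\log(1/\delta)/T_1})$ on an event of probability $\ge 1-\delta$. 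The optimal value of CQP~\eqref{lp:attackability} is Lipschitz in the parameter entering its constraints (standard perturbation analysis of a convex program with a Slater point), so $\vert\tilde\epsilon^* - \epsilon^*\vert = O(\Vert\tilde\btheta_\parallel - \btheta^*_\parallel\Vert_2)$; for $T_1$ larger than a constant depending on $\epsilon^*$ and $\delta$ this forces $\mathrm{sign}(\tilde\epsilon^*) = \mathrm{sign}(\epsilon^*)$, and $\tilde\epsilon^* \ge \epsilon^*/2$ when the environment is attackable --- which is exactly the ``correctly assert'' claim with probability $\ge 1-\delta$.

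For blocks~(ii)--(iii), I write every reward the algorithm sees as $\tilde r_t = x_{a_t}^\mt \tilde\btheta + \xi_t + c_t$, where $\xi_t$ is the injected sub-Gaussian noise and $c_t$ is a corruption term, and bound $\big\Vert\sum_{i\le t} x_{a_i} c_i\big\Vert_{\bA_t^{-1}}$. This splits into: (a) the $N_1$ stage-1 non-target pulls, each with $\vert c_i\vert = O(1)$, contributing $O(N_1) = \tilde O(\sqrt{T_1}\log(T_1/\delta))$; (b) the stage-1 pulls of $\tilde x$ together with the single compensation spike, whose $c$-terms cancel exactly by construction, contributing $0$; (c) stage-2 non-target pulls, where $c_i = 0$ since $\tilde r_t = x_{a_t}^\mt\tilde\btheta + \tilde\eta_t$ identically; and (d) stage-2 pulls of $\tilde x$, where $c_i = \tilde x^\mt(\btheta^*_\parallel - \tilde\btheta_\parallel)$ has magnitude $O(\sqrt{\log(1/\delta)/T_1})$. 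For (d) I use that all these corruptions lie along $\tilde x$ and that $\tilde x$ has been pulled $n(\tilde x)+m = \Theta(T_1)+m$ times, together with the elementary bound $\Vert\tilde x\Vert_{\bA_t^{-1}} \le 1/\sqrt{n(\tilde x)+m}$ (from $\bA_t \succeq (\#\,\tilde x\text{ pulls})\,\tilde x\tilde x^\mt$ and Sherman--Morrison), so term (d) is at most $m\cdot O(\sqrt{\log(1/\delta)/T_1})\cdot (n(\tilde x)+m)^{-1/2} = O(\sqrt{T\log(1/\delta)}/\sqrt{T_1})$ using $m\le T$. Adding the standard self-normalized term $\tilde O(\sqrt{d\log(T/\delta)})$ and $\sqrt\lambda\Vert\tilde\btheta\Vert$, the effective LinUCB confidence width in stage~2 inflates to $\gamma_T = \tilde O\big(\sqrt{d\log(T/\delta)} + \sqrt{T_1}\log(T_1/\delta) + \sqrt{T\log(1/\delta)}/\sqrt{T_1}\big)$; feeding $\gamma_T$ into the standard LinUCB regret analysis (where $\tilde x$ is optimal under $\tilde\btheta$ with margin $\tilde\epsilon^*\ge\epsilon^*/2$) bounds the stage-2 non-target pulls by $O(\gamma_T\sqrt{dT\log(T/\delta)}/\epsilon^*)$, which after adding $N_1$ gives the stated bound, at an extra cost of $2\delta$ for the two LinUCB confidence events, i.e.\ $1-3\delta$ overall.

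Finally, for the cost: each stage-1 modification and each stage-2 non-target modification changes a reward by $O(\sqrt{\log(T/\delta)})$ w.h.p., the single compensation spike changes one reward by $O(T_1)$, and stage-2 pulls of $\tilde x$ are never touched; summing gives $C(T) = \tilde O(T_1) + \tilde O(\#\text{ non-target pulls})$, matching the displayed cost, with one more noise-concentration event raising the failure probability to $4\delta$. Minimizing $T_1 + \sqrt{T_1 T} + T/\sqrt{T_1}$ over $T_1$ yields $T_1 = T^{1/2}$ and value $\tilde O(T^{3/4})$ for both the cost and the number of non-target pulls. The main obstacle is block~(iii): one must control the stage-2 confidence radius tightly enough that the residual model mismatch on $\tilde x$ does not destroy the $\tilde\epsilon^*$ margin --- this is exactly where the damping factor $\Vert\tilde x\Vert_{\bA_t^{-1}} = O(1/\sqrt{\#\,\tilde x\text{ pulls}})$ is essential, and where the exact cancellation engineered by the compensation term must be checked carefully, since a naive bound on the $\tilde x$-corruption would be linear in $T_1$.
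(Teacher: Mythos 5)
Your proposal follows essentially the same route as the paper: a Hoeffding-based estimate of $\btheta^*_\parallel$ from the stage-one target-arm pulls, a self-normalized corruption bound of the form $\Vert\sum_i x_{a_i}c_i\Vert_{\bA_t^{-1}}$ splitting into the stage-one non-target corruption ($\tilde O(\sqrt{T_1})$ terms of size $O(1)$) and the stage-two target-arm misspecification $\gamma\sqrt{t}$ with $\gamma = O(\sqrt{\log(1/\delta)/T_1})$ (the paper's Lemma on robustness of ridge regression), followed by the standard LinUCB regret argument against $\tilde\btheta$ divided by the margin $\epsilon^*$, and the same cost decomposition and $T_1=T^{1/2}$ optimization. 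The only (harmless) addition is your explicit Lipschitz/perturbation argument for the CQP value, which the paper handles more informally by noting the reward-estimation error on $\tilde x$ drops below $\epsilon^*$.
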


\begin{proof}[Proof Sketch.]
To prove the the assertion is correct with high probability, the key idea is that the estimated $\tilde\btheta_\parallel$ is close to the true parameter $\btheta^*_\parallel$.
We first note that in the first stage, the bandit algorithm will pull the target arm $\tilde x$ $T_1 -O(\sqrt{T_1})$ times, since $\tilde x$ is the best arm according to $\btheta_0$. According to the Hoeffding's inequality, the estimation error  
$\Vert\tilde\btheta_\parallel -  \btheta^*_\parallel \Vert_2 \leq \sqrt{\frac{2\log(2/\delta)}{T_1 -O(\sqrt{T_1})}}$. 
Therefore, with a large enough $T_1$, the error on $\tilde x$'s reward estimation is smaller than $\epsilon^*$. Thus solving CQP \eqref{lp:attackability} with $\tilde\btheta_\parallel$ and we can correctly assert attackability with positive estimated index $\tilde\epsilon^*$ when the environment is attackable with index $\epsilon^*$. 

To prove the success and the cost of the attack, we need to analyze the behavior of LinUCB under the reward discrepancy  between the two stages.
Our proof crucially hinges on the following robustness result of LinUCB. 

\begin{lemma}\label{lemma:robustrness_ridgereg}
Consider LinUCB with ridge regression under poisoning attack. Let $S'_t = \sum_{\tau\in\{1 \dots t\}, x_{a_\tau}\neq\tilde x} |\Delta_\tau|$ be the total corruption on non-target arms until time $t$ and assume every corruption on target arm is bounded by $\gamma$. For any $t= 1 \dots T$, with probability at least $1-\delta$ we have 
 \begin{equation}
    \Vert \tilde\btheta- \hat{\btheta}_t\Vert _{A_t} \leq \alpha_t + S'_t/\sqrt{\lambda} + \gamma\sqrt{t}
 \end{equation} 
 where $\alpha_t = \sqrt{d\log\left(\frac{1+t/\lambda}{\delta}\right)} + \sqrt{\lambda}$.  
\end{lemma}

Based on this lemma, we can derive the regret $R_T(\tilde\btheta)$ of LinUCB with $\tilde\btheta$ as the true parameter. The total corruption on non-target arms is $O\big(d\sqrt{T_1}\log{(T_1/\delta)}\big)$ given the rewards are generated by $\btheta_0$ (the rewards of target arm in the first stage are compensated in line 20). Because the target arm's rewards are not attacked in the second stage and follows $\btheta^*$, we have $\gamma = \tilde O(1/\sqrt{T_1})$.  Since the non-target arms are pulled at most $R_T(\tilde\btheta)/\epsilon^*$ rounds, substitute the total corruption back and we have the resulting bound. 

The attack cost has two sources: attacks in the first stage for $T_1$ times, and attacks on the non-target arms in the second stage. The second term has the same order as the number of rounds where the non-target arms are pulled by LinUCB. Each attack cost can be decomposed as 1) the change of mean reward $|x_a^\mt (\tilde\btheta - \btheta^*)|$, and 2) the sub-Gaussian noise $|\tilde \eta_t|$, the sum of which increases linearly with high probability. By setting $T_1 = T^{1/2}$, the total cost achieves $\tilde O(T^{3/4})$.
\end{proof}

\begin{remark}
Lemma \ref{lemma:robustrness_ridgereg} shows that LinUCB still enjoys sublinear regret for any corruption amount $S = o(\sqrt{T})$. This tolerance of $o(\sqrt{T})$ attack turns out to be the same as the recently proposed robust linear contextual bandit algorithm based on phase elimination in \citep{bogunovic2021stochastic} (which we examine   next). However, the regret term $S \sqrt{T}$ in LinUCB has a worse dependence  on $S$  within the $S = o(\sqrt{T})$ regime   compared to the $O(S^2)$ regret dependence of the robust algorithm in \citep{bogunovic2021stochastic}. 
\end{remark} 

\paragraph{Attack against Robust Phase Elimination.}   We now show that Robust Phase Elimination (RobustPhE) \citep{bogunovic2021stochastic} can  also be attacked by Algorithm~\ref{alg:attack}.  Comparing to attacking LinUCB, the robustness of RobustPhE brings challenge to the first stage attack, as the attack cost is more sensitive to the length of this stage. 

\begin{corollary}\label{corollary:attackrobustPhE}

For any large enough $T_1$,  the attack strategy in Algorithm \ref{alg:attack}  will correctly assert the attackability with high probability. Moreover, when the environment is attackable, with probability at least $1-2\delta$ the attack strategy  will fool RobustPhE to pull  non-target arms at most
\[O\Big(\big(d\sqrt{T}\log (T/\delta) + \sqrt{d}T\log(T)\log (1/\delta)/\sqrt{T_1} + T_1^2\big)/\epsilon^*\Big)\]
\normalsize
rounds. And with probability at least $1-3\delta$, the adversary's cost is at most
\[
O\Big(T_1 + \big(d\sqrt{T}\log (T/\delta) + \sqrt{d}T\log(T)\log (1/\delta)/\sqrt{T_1} + T_1^2\big)/\epsilon^*\Big)
\]
\normalsize
Specifically, setting $T_1 = T^{2/5}$ gives the minimum attack cost order $\tilde O(T^{4/5})$ and the non-target arms are pulled at most $\tilde O(T^{4/5})$ rounds. 
\end{corollary}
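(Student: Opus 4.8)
The plan is to reuse the two-stage argument from the proof of Theorem~\ref{theorem:attackLinUCB}, with the ridge-regression robustness bound of Lemma~\ref{lemma:robustrness_ridgereg} replaced by the corruption-robustness guarantee of RobustPhE from \citep{bogunovic2021stochastic}. \textbf{First stage and attackability assertion.} Since the initial test $\epsilon^*_0>0$ passed, $\tilde x$ is the unique optimal arm under $\btheta_0$ with a positive constant gap $\epsilon^*_0$, and during rounds $1,\dots,T_1$ the rewards fed to the learner are exactly consistent with $\btheta_0$; as RobustPhE is in particular a no-regret algorithm on uncorrupted data, its regret on this instance is $o(T_1)$ and hence $n(\tilde x)=T_1-o(T_1)$. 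Hoeffding's inequality then yields $\Vert\tilde\btheta_\parallel-\btheta^*_\parallel\Vert_2=\tilde O(1/\sqrt{T_1})$ with probability $1-\delta$; since the optimal value of CQP~\eqref{lp:attackability} is Lipschitz in $\btheta^*_\parallel$, we get $|\tilde\epsilon^*-\epsilon^*|=\tilde O(1/\sqrt{T_1})$, so for $T_1$ above a constant the sign of $\tilde\epsilon^*$ matches that of $\epsilon^*$ and the attackability assertion is correct with high probability. This step is essentially identical to the corresponding part of the proof of Theorem~\ref{theorem:attackLinUCB}.

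\textbf{Bounding the regret of RobustPhE under the attack.} Put $\tilde\btheta=\tilde\btheta_\parallel+\tilde\btheta_\perp$; under $\tilde\btheta$ the target arm is optimal with every non-target suboptimality gap at least $\tilde\epsilon^*-\tilde O(1/\sqrt{T_1})\ge\epsilon^*/2$ once $T_1$ is large. I would regard $\tilde\btheta$ as the parameter faced by RobustPhE and split the resulting corruption into two qualitatively different pieces. \emph{(i) One large spike.} During stage~1 the target arm accrued a cumulative discrepancy from $\tilde x^\mt\tilde\btheta$ because its rewards followed $\btheta_0$; line~20 cancels this with a single perturbed reward of magnitude $n(\tilde x)\,|\tilde x^\mt(\tilde\btheta-\btheta_0)|=\Theta(T_1)$, so that the empirical mean on $\tilde x$ matches $\tilde x^\mt\tilde\btheta$ afterwards. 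Together with the $O(1)$-per-round mismatch on the $o(T_1)$ non-target pulls of stage~1, the total $\ell_1$ corruption caused by the stage switch is $C_1=O(T_1)$. \emph{(ii) A small persistent bias.} In stage~2 the target arm receives its true reward $\tilde x^\mt\btheta^*$ instead of $\tilde x^\mt\tilde\btheta$, a uniform shift of size $\gamma:=|\tilde x^\mt(\btheta^*_\parallel-\tilde\btheta_\parallel)|=\tilde O(1/\sqrt{T_1})$ on every target pull, while non-target rewards are fed exactly as $x_a^\mt\tilde\btheta$. Feeding $C_1$ into the quadratic-in-corruption term of RobustPhE's regret (cf.\ the remark after Lemma~\ref{lemma:robustrness_ridgereg}) produces an $O(T_1^2)$ contribution, whereas the $\gamma$-bias, being a benign uniform shift of one arm's mean rather than adversarial corruption, should be charged only linearly and contributes $\tilde O(\gamma T)=\tilde O(T/\sqrt{T_1})$; together with the clean phase-elimination regret $\tilde O(d\sqrt{T})$ this gives
\[
R_T(\tilde\btheta)=O\Big(d\sqrt{T}\log(T/\delta)+\sqrt{d}\,T\log(T)\log(1/\delta)/\sqrt{T_1}+T_1^2\Big).
\]

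\textbf{From regret to pulls and cost; choosing $T_1$.} Because every non-target arm is suboptimal by at least $\epsilon^*/2$ under $\tilde\btheta$, the number of non-target pulls is at most $2R_T(\tilde\btheta)/\epsilon^*$, which is the stated bound. For the cost, stage~1 contributes $\le 2T_1$ (one $O(1)$ perturbation per round), the compensation round contributes $\Theta(T_1)$, and each non-target pull in stage~2 contributes $O(1)$; the stochastic parts $\sum_t|\tilde\eta_t-\eta_t|$ concentrate to the same order with probability $1-\delta$, giving total cost $O(T_1+R_T(\tilde\btheta)/\epsilon^*)$. Finally, among the $T_1$-dependent terms the increasing $T_1^2$ and the decreasing $T/\sqrt{T_1}$ balance at $T_1^{5/2}=T$, i.e., $T_1=T^{2/5}$, which yields total cost and non-target-pull count $\tilde O(T^{4/5})$.

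\textbf{Main obstacle.} The delicate step is the corruption bookkeeping against a \emph{phase-elimination} algorithm: Lemma~\ref{lemma:robustrness_ridgereg} does not apply, and one needs RobustPhE's guarantee in a refined form that (a) charges the one-time $\Theta(T_1)$ compensation spike (equivalently, the cumulative stage-1 target mismatch) only to the quadratic corruption term, while charging the tiny uniform $\gamma$-drift on the target arm only to a linear $\tilde O(\gamma T)$ term rather than a quadratic $\tilde O((\gamma T)^2)$ one; and (b) controls the phases that straddle round $T_1$ and hence observe a mixture of $\btheta_0$- and $\tilde\btheta$-generated rewards, showing such boundary phases add only the $O(T_1)$ corruption mass already accounted for. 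Secondary technicalities are verifying that RobustPhE's robust-case bound specializes to the standard $\tilde O(\sqrt{dT_1})$ regret on the uncorrupted stage-1 data, checking that the $\tilde O(1/\sqrt{T_1})$ estimation error only shrinks each non-target gap from $\tilde\epsilon^*$ to $\ge\epsilon^*/2$ without otherwise disturbing the elimination schedule, and composing the failure events (stage-1 concentration, RobustPhE's high-probability regret, cost-noise concentration) to obtain the stated $1-2\delta$ and $1-3\delta$ probabilities.
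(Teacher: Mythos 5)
Your proposal is correct and follows essentially the same route as the paper: it reuses the Theorem~\ref{theorem:attackLinUCB} framework, charges the $O(T_1)$ stage-one corruption to RobustPhE's quadratic $O(S^2)$ term to get $T_1^2$, treats the unattacked target-arm rewards in stage two as a $\gamma=\tilde O(1/\sqrt{T_1})$ misspecification contributing $\tilde O(\gamma\sqrt{d}\,T\log T)$ (the paper cites Proposition 5.1 of \citep{lattimore2020learning} for exactly this), and balances at $T_1=T^{2/5}$. The only cosmetic difference is that you retain the line-20 compensation spike and fold its $\Theta(T_1)$ magnitude into the corruption budget, whereas the paper observes that for RobustPhE this compensation can be dropped entirely since the stage-one corruption mass is already absorbed by the $O(S^2)$ robustness guarantee; both yield the same $O(T_1)$ corruption total and hence the same bounds.
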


RobustPhE has an additional regret term $O(S^2)$ for total corruption $S$ (assuming $S$ is unknown to the bandit algorithm). If we view the second stage attack model $\tilde\btheta$ as the underlying environment bandit model, rewards generated by $\btheta_0$ in the first stage should be considered as corrupted rewards. The $T_1$ amount of rewards from the first stage means $T_1$ amount of corruption, which leads to the additional $T_1^2$ term 
compared with the results in Theorem~\ref{theorem:attackLinUCB}. Hence, the adversary can only run fewer iterations in the first stage but spend more attack cost there.
On the other hand, this also facilitates the design of attack such that line 19-20 in Algorithm \ref{alg:attack} is not necessary: the corruption in the first stage can be handled by the robustness of RobustPhE. The unattacked rewards in second stage are viewed as misspecification from $\tilde\btheta$ with error $\gamma$, which leads to the $\tilde O(\gamma T)$ term (the second term) in the bound. Our success in attacking RobustPhE does not violate the robustness claim in the original paper~\citep{bogunovic2021stochastic}: RobustPhE could tolerate $
o(\sqrt{T})$ corruption and our attack cost is $\tilde O(T^{4/5})$.

\section{Experiments}\label{sec:exp}

\begin{figure}[t]
    \centering
    \begin{tabular}{c c}
     \includegraphics[width=7.5cm]{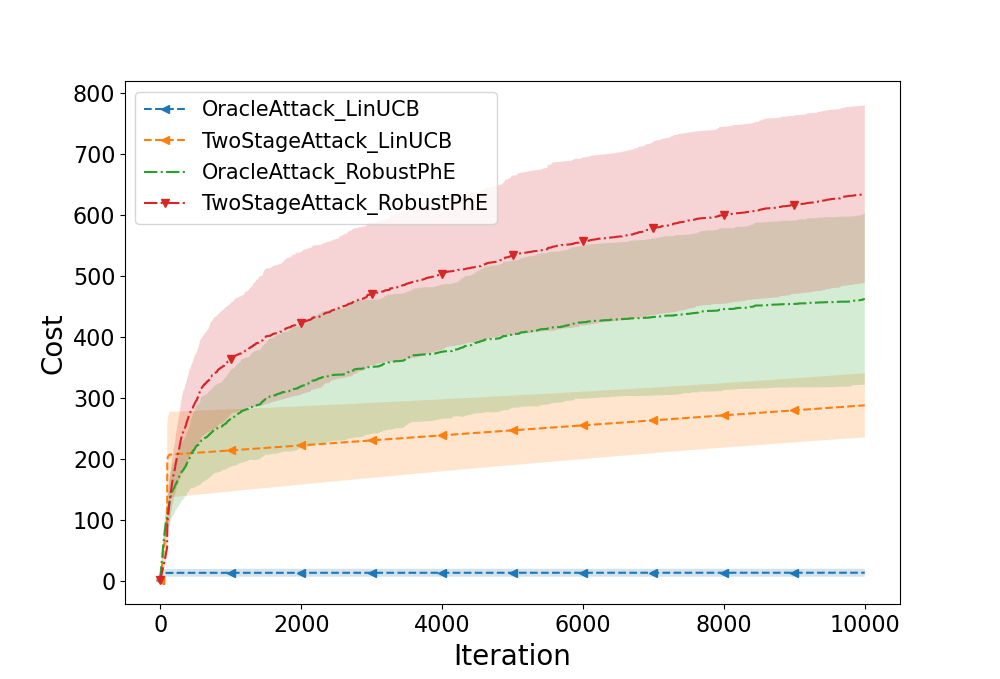} & 
     \includegraphics[width=7.5cm]{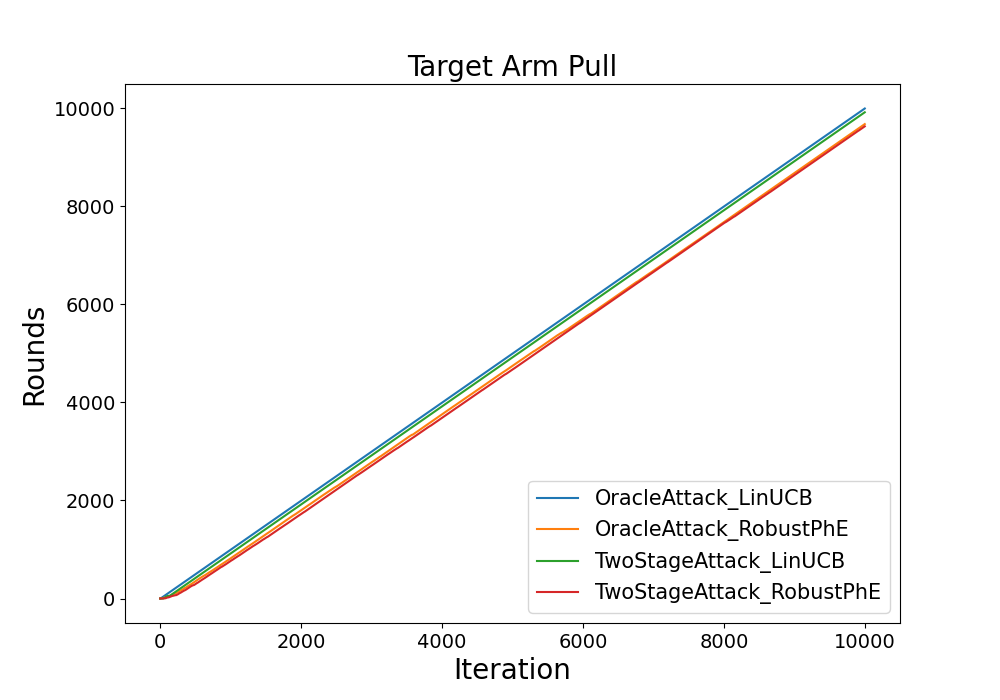}
     \\
    (a) Total cost of attack. & (b) Target arm pulls.
    \end{tabular}
    \caption{Total cost and target arm pulls under different attack methods. We report averaged cost and variance of 10 runs. }\label{fig:simu}
\end{figure}

We use simulation-based experiments to validate the effectiveness and cost-efficiency of our proposed attack methods.  
In our simulations, we generate a size-$k$ arm pool $\cA$, in which each arm $a$ is associated with a context vector $x_a \in \mathbb{R}^d$. Each dimension of $x_a$ is drawn from a set of zero-mean Gaussian distributions with variances sampled from a uniform distribution $U(0,1)$. Each $x_a$ is then normalized to $\Vert x_a\Vert_2 = 1$.
The bandit model parameter $\btheta^*$ is sampled from $N(0, 1)$ and normalized to $\Vert\btheta^*\Vert_2  = 1$. We set $d$ to 10, the standard derivation $\sigma$ of Gaussian noise $\eta_t$ and $\tilde\eta_t$ to 0.1, and the arm pool size $k$ to 30 in our simulations. We run the experiment for $T=10,000$ iterations. To create an attackable environment, we will re-sample the environment $\langle \cA, \btheta^*, \tilde x \rangle$ until it is attackable, following Theorem~\ref{theorem:characterization}. 

We compared the two proposed attack methods, oracle null space attack and two-stage null space attack, against LinUCB~\cite{li2010contextual} and Robust Phase Elimination (RobustPhE)~\cite{bogunovic2021stochastic}. We report average results of 10 runs where in each run we randomly sample an attackable environment.  Both oracle attack and two-stage attack can effectively fool the two bandit algorithms to pull the target arm linear times as shown in Figure~\ref{fig:simu}(b), and the total cost of the attack is shown in
Figure~\ref{fig:simu}(a). We observe that both attack methods are cost-efficient with a sublinear total cost, while the two-stage attack requires higher attack cost when attacking the same bandit algorithm. Specifically, we notice that the adversary spends almost \emph{linear} cost in  the first stage. This is because in  the first stage the adversary  attacks according to parameter $\btheta_0$, which leads to an almost constant cost at every round. This is to help the adversary to estimate bandit model parameter in order to construct target parameter $\tilde\btheta$. In the second stage, the cost gets much smaller since the adversary only attacks the non-target arms. 
We also notice that for the same attack method, attacking RobustPhE requires a higher cost and the number of target arm pull is also smaller comparing with attacking LinUCB, due to the robustness of the algorithm.



\section{Related Work}
Adversarial attacks to  bandit algorithms was first studied in the stochastic multi-armed bandit setting~\citep{jun2018adversarial, liu2019data} and recently in linear contextual bandits  \citep{garcelon2020adversarial}.  
These works share a similar attack idea:  lowering the rewards of non-target arms while not modifying the reward of target arm. However, as our attackability analysis revealed, this idea can fail in a linear stochastic bandit environment where one cannot lower the rewards of non-target arms without modifying the reward of target arm, due to their correlation. This insight is a key reason that gives rise to unattackable environments.   \citet{ma2018data} also considered the attackability issue of linear bandits, but under the setting of \emph{offline} data poisoning attack where the adversary has the power to modify the rewards in history. 
There are also several recent works on reward poisoning attacks against reinforcement learning~\citep{yu2019efficient,zhang2020adaptive,rakhsha2021reward,rakhsha2020policy}, but with quite different focus as ours. 
Besides reward poisoning attacks, recent works also studied other threat model such as action poisoning attacks~\citep{liu2020action,liu2021provably}.


A parallel line of works focused on improving the robustness of bandit algorithms.   \citet{lykouris2018stochastic} proposed a robust MAB algorithm and  \citet{gupta2019better} further improved the solution with additive regret dependency on attack cost. \citet{zimmert2021tsallis, masoudian2021improved} proposed best-of-both-world solutions for both stochastic and adversarial bandits which also solved stochastic bandits with adversarial corruption.  \citet{ito2021optimal} further proposed optimal robust algorithm to adversarial corruption. 
These work assumed a weaker oblivious adversary who determines the manipulation before the bandit algorithm pulls an arm.   \citet{hajiesmaili2020adversarial} studied robust  adversarial bandit algorithm.  \citet{guan2020robust} proposed a median-based robust bandit algorithm for probabilistic unbounded attack.  \citet{bogunovic2021stochastic} proposed robust phase elimination algorithm for linear stochastic bandits under a stronger adversary (same as ours), which could tolerate $o(\sqrt{T})$ corruption when the total corruption is unknown to the algorithm. We showed that our two-stage null space attack could effectively attack this algorithm with $\tilde O(T^{4/5})$  cost.
Recently, \citet{zhao2021linear} proposed an OFUL style robust algorithm that can handle infinite action set, but only tolerates $o(T^{1/4})$ corruption.

\section{Conclusion}
In this paper, we studied the problem of poisoning attacks in $k$-armed linear stochastic bandits.
Different from context-free stochastic bandits and $k$-armed linear contextual bandits where the environment is always attackable, we showed that some linear stochastic bandit environments are \emph{not} attackable due to the correlation among arms. We characterized the attackability condition as the feasibility of a CQP based on the geometry of the arms' context vectors. Our key insight is that given the ground-truth parameter $\btheta^*$, the adversary should perform oracle attack that lowers the reward of non-target arms in the null space of the target arm's convex vector $\tilde x$. 
Based on this insight, we proposed a two-stage null space attack without the knowledge of $\btheta^*$ and applied it against LinUCB and Robust Phase Elimination. We showed  that the proposed attack methods are effective and cost-efficient, both theoretically and empirically. 

As our future work, it is interesting to study the lower bound of attack cost in linear stochastic bandits and also design cost-optimal attack method with a matching upper bound. One idea is to design a multi-stage attack method following the doubling trick idea, which was brief discussed in Appendix \ref{sec:multi-stage}. Although the analysis could be much more challenging than our two-stage attack, it may lead to a lower attack cost as well as handling unknown time horizon $T$. 
Another intriguing direction is  to study algorithm-oblivious choice of the length of the first stage $T_1$ --- or more generally, algorithm-oblivious attack strategies --- that can achieve  sublinear cost for \emph{arbitrary} no-regret algorithm without the knowledge of $\btheta^*$.

\section*{Acknowledgements}

This work was supported in part by National Science Foundation Grant IIS-2128019, IIS-1618948, IIS-1553568,  Bloomberg Data Science Ph.D. Fellowship, and a Google Faculty Research Award.

\bibliographystyle{apalike}
\bibliography{main.bib}

\newpage
\appendix
\onecolumn

\section{Notations}
For clarity, we collect the notations used in the paper below.

\begin{tabular}{l|l}
    $\tilde x$ & Context vector of target arm\\
    $x_a$ & Context vector of arm $a$\\
    $\btheta^*$ & Unknown bandit model parameter of the environment\\
    $\btheta^*_{\parallel}$ & Projection of $\btheta^*$ on $\tilde x$\\
    $r_t$ & Unattacked reward feedback at time $t$\\
    $\eta_t$ & Sub-Gaussian noise in reward, i.e., $r_t = x_t^\mt\btheta^* + \eta_t$.\\
    $\tilde r_t$ & Attacked reward\\
    $\hat\btheta_t$ &  Parameter estimated by the victim bandit algorithm with attacked rewards $\{\tilde r_{1:t}\}$\\
    $\tilde\btheta_\parallel$ & Parameter parallel to $\tilde x$, estimated by adversary with unattacked rewards\\
    $\tilde \btheta$ & Paramter of adversary's attack strategy\\
    $\tilde \btheta_\perp$ & Attackability certificate, the parameter orthogonal to $\tilde x$ solved by CQP \eqref{lp:attackability}\\
    $\epsilon^*$& Attackability index, optimal objective of CQP \eqref{lp:attackability}\\
    $\tilde\epsilon^*$& Estimated index, optimal objective of CQP \eqref{lp:attackability} with $\tilde\btheta_\parallel$ replacing $\btheta^*_\parallel$\\
    
\end{tabular}

\section{Details on Attackability of Linear Stochastic Bandits}\label{append:main-proof}

\subsection{Necessity Proof of Theorem \ref{theorem:characterization} }

To prove its necessity, we will rely on the following results.

\begin{lemma}\label{lemma:cb}
Suppose arm $x$ is pulled $n$ times till round $t$ by LinUCB. Its confidence bound $\text{CB}_t(x)$ in LinUCB satisfies
\begin{equation}\label{eq:x_Anorm}
\text{CB}_t(x) \leq \frac{\alpha_t}{\sqrt{n}}.
\end{equation}
with probability at least $1-\delta$, where $\alpha_t = \sqrt{d\log\left(\frac{1+t/\lambda}{\delta}\right)} + \sqrt{\lambda}$. Furthermore, we have
\begin{equation}
\text{CB}_t(x) \leq O\left(\sqrt{\frac{ \log (t/\delta)}{n}}\right)    
\end{equation}
with probability at least $1-\delta$.
\end{lemma}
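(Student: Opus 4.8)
The plan is to bound the confidence ellipsoid radius term $\text{CB}_t(x) = \alpha_t \lVert x \rVert_{\bA_t^{-1}}$ by controlling the matrix norm $\lVert x \rVert_{\bA_t^{-1}}$ alone, since the scalar $\alpha_t$ is already in the desired form. The central observation is that if arm $x$ has been pulled $n$ times up to round $t$, then the design matrix $\bA_t = \lambda \bI + \sum_{i=1}^t x_{a_i} x_{a_i}^\mt$ contains at least $n$ copies of the rank-one term $x x^\mt$. Hence $\bA_t \succeq \lambda \bI + n\, x x^\mt$ in the positive semidefinite order, and since $M \mapsto M^{-1}$ is operator-antimonotone on positive definite matrices, $\bA_t^{-1} \preceq (\lambda \bI + n\, x x^\mt)^{-1}$, giving $\lVert x \rVert_{\bA_t^{-1}}^2 \le x^\mt (\lambda \bI + n\, x x^\mt)^{-1} x$.

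First I would evaluate the right-hand side explicitly using the Sherman--Morrison formula:
\begin{equation*}
x^\mt (\lambda \bI + n\, x x^\mt)^{-1} x = \frac{1}{\lambda}\lVert x\rVert_2^2 - \frac{1}{\lambda}\cdot\frac{n\lVert x\rVert_2^4/\lambda}{1 + n\lVert x\rVert_2^2/\lambda} = \frac{\lVert x\rVert_2^2}{\lambda + n\lVert x\rVert_2^2}.
\end{equation*}
Since $\lVert x\rVert_2 \le 1$ by the normalization assumption, this quantity is at most $\tfrac{\lVert x\rVert_2^2}{n\lVert x\rVert_2^2} = \tfrac{1}{n}$ when $x \ne 0$; dropping the $\lambda$ term only increases the bound. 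Therefore $\lVert x\rVert_{\bA_t^{-1}} \le 1/\sqrt{n}$, and multiplying by $\alpha_t$ gives the first inequality \eqref{eq:x_Anorm}. The high-probability qualifier is inherited entirely from the event on which $\lVert \btheta^* - \hat\btheta_t\rVert_{\bA_t} \le \alpha_t$ holds, i.e., the standard self-normalized concentration bound underlying the validity of $\alpha_t$ as a confidence radius (this is where the $1-\delta$ comes from; the deterministic bound on $\lVert x\rVert_{\bA_t^{-1}}$ holds surely). For the second inequality I would simply substitute $\alpha_t = \sqrt{d\log\!\big(\tfrac{1+t/\lambda}{\delta}\big)} + \sqrt{\lambda}$ and absorb constants and the $\sqrt{\lambda}$ additive term into the $O(\cdot)$, noting $d\log\!\big(\tfrac{1+t/\lambda}{\delta}\big) = O(\log(t/\delta))$ for fixed $d$ and $\lambda$, so $\alpha_t = O(\sqrt{\log(t/\delta)})$ and thus $\text{CB}_t(x) = O\!\big(\sqrt{\log(t/\delta)/n}\big)$.

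I do not expect a serious obstacle here — the argument is a routine PSD-monotonicity plus Sherman--Morrison computation. The only point requiring mild care is making explicit that the "pulled $n$ times" hypothesis is what licenses $\bA_t \succeq \lambda\bI + n\,xx^\mt$ (each pull of $x$ contributes exactly one $xx^\mt$ summand, and all other summands $x_{a_i}x_{a_i}^\mt$ are PSD, so they can be discarded downward), and being careful that the probabilistic statement is only about $\alpha_t$'s validity, not about the norm bound itself. If one wanted to be fully self-contained one could also cite the elliptical-potential / determinant-trace lemma, but the direct Sherman--Morrison route is cleaner and tighter for a single fixed arm.
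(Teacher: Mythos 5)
Your proof is correct and follows essentially the same route as the paper's: both hinge on the PSD comparison $\bA_t \succeq \lambda\bI + n\,xx^\mt$ to get $\lVert x\rVert_{\bA_t^{-1}} \le 1/\sqrt{n}$, then multiply by $\alpha_t$ and absorb constants. The only difference is that you make the Sherman--Morrison computation explicit where the paper simply asserts the bound, which is a welcome but inessential addition of detail.
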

\begin{proof}
In \cite{Improved_Algorithm}, the exploration bonus term is computed as $\text{CB}_t(x) = \alpha_t \lVert x \rVert_{\bA_t^{-1}}$.
Denote $\bA'_t = n\times xx^\mt + \lambda\bI$. Since $\bA_t = \sum_{i=1}^t x_{a_i} x_{a_i}^\mt + \lambda\bI$, we have  $\bA_t \succ \bA'_t$. We can thus bound $\lVert x \rVert_{\bA_t^{-1}}$ by
\begin{equation}\label{eq:x_Ainv}
    \lVert x \rVert_{\bA_t^{-1}} \leq \lVert x \rVert_{{\bA'_t}^{-1}} \leq \frac{1}{\sqrt{n}}
\end{equation}

According to Theorem 2 in ~\cite{Improved_Algorithm},  
\begin{equation}
 \alpha_t = \sqrt{d\log\left(\frac{1+t/\lambda}{\delta}\right)} + \sqrt{\lambda} = \Theta(\sqrt{\log (t/\delta)}).   
\end{equation}
Combining $\alpha_t$ and \eqref{eq:x_Ainv}  finishes the proof.
\end{proof}
\begin{claim}\label{claim:cb}
Target arm $\tilde x$ is pulled $n = T-o(T)$ times till round $T$. According to Lemma~\ref{lemma:cb}, we have
\begin{equation}
\text{CB}_t(\tilde x) \leq O\left(\sqrt{\frac{\log (T/\delta)}{T-o(T)}}\right)    
\end{equation}
\end{claim}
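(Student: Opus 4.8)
The claim to prove is essentially a direct specialization of Lemma~\ref{lemma:cb}: since the target arm $\tilde x$ is pulled $n = T - o(T)$ times by round $T$, I simply substitute $n = T - o(T)$ and $t = T$ into the second inequality of Lemma~\ref{lemma:cb}. So the plan is short and the ``proof'' is really just a bookkeeping step, but there are a couple of small technical points worth being careful about.

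\textbf{Step 1: Invoke Lemma~\ref{lemma:cb} with the right arm and counts.} Apply Lemma~\ref{lemma:cb} with $x = \tilde x$ and with the round index equal to $T$. By hypothesis (which is exactly the ``attackability'' assumption that $\tilde x$ is pulled $T - o(T)$ times), the pull count $n$ of $\tilde x$ satisfies $n = T - o(T)$. The lemma then gives $\text{CB}_T(\tilde x) \leq O\!\left(\sqrt{\log(T/\delta)/n}\right)$ with probability at least $1-\delta$, and substituting $n = T - o(T)$ yields the stated bound. One subtlety: the claim as written uses $\text{CB}_t(\tilde x)$ while we are really talking about round $t = T$; I would either state it for $t = T$ throughout, or note that for any $t \le T$ the pull count of $\tilde x$ up to round $t$ is at most its pull count up to $T$, and for the rounds $t$ where $\tilde x$ is actually pulled in the late phase the count is already $t - o(t) = T - o(T)$; in any case the $O(\cdot)$ bound is monotone in the right direction. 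This is the only place where a (very mild) obstacle arises — making sure the quantifier over $t$ is handled cleanly — but it is routine.

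\textbf{Step 2: Simplify the asymptotic expression.} Note $\sqrt{\log(T/\delta)/(T - o(T))} = \Theta\!\left(\sqrt{\log(T/\delta)/T}\right)$, since $T - o(T) = \Theta(T)$. Hence the bound can equivalently be written as $\text{CB}_T(\tilde x) \le O\!\left(\sqrt{\log(T/\delta)/T}\right)$, which is the form actually used later in the necessity proof (it is the upper bound referred to in the proof sketch of Theorem~\ref{theorem:characterization}). I would present both forms for clarity.

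In short, there is no real difficulty here: the claim is a corollary of Lemma~\ref{lemma:cb} obtained by plugging in the pull count guaranteed by the attackability assumption, and the only care needed is in stating precisely which round the confidence bound is evaluated at and absorbing the $-o(T)$ into the $\Theta(T)$ denominator. The proof is a one-liner: ``By Lemma~\ref{lemma:cb} applied to $x = \tilde x$ with $n = T - o(T)$ pulls by round $T$, with probability at least $1-\delta$ we have $\text{CB}_T(\tilde x) \le O(\sqrt{\log(T/\delta)/(T-o(T))}) = O(\sqrt{\log(T/\delta)/T})$.''
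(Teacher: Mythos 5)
Your proof is correct and matches the paper's own treatment: Claim~\ref{claim:cb} is stated in the paper as an immediate corollary of Lemma~\ref{lemma:cb}, obtained exactly as you do by substituting $x=\tilde x$ and $n = T-o(T)$ into the second bound of that lemma. Your additional remarks on the $t$ versus $T$ quantifier and on $T-o(T)=\Theta(T)$ are sensible clarifications but do not change the argument.
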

\begin{lemma}\label{lemma:cb_lowerbound}
Suppose arm $x$ is pulled $t-m$ times till round $t$ by LinUCB, and other arms are pulled $m$ times in total. Confidence bound $\text{CB}_t(x_a)$ of any arm $x_a$ that is not parallel to $x$ satisfies
\begin{equation}\label{eq:x_Anorm_lowerbound}
\text{CB}_t(x_a) \geq \alpha_t \left(\frac{1}{\sqrt{m +\lambda}} -\frac{b}{\sqrt{t-m+\lambda}}\right)
\end{equation}
with probability at least $1-\delta$, where $\alpha_t = \sqrt{d\log\left(\frac{1+t/\lambda}{\delta}\right)} + \sqrt{\lambda}S$ and  constant $b = \frac{x_a^\mt x }{x^\mt x}$. Furthermore, we have
\begin{equation}
\text{CB}_t(x_a) \geq \Theta\left(\sqrt{\log (t/\delta)} \left(\frac{1}{\sqrt{m}} -\frac{1}{\sqrt{t-m}}\right)\right)
\end{equation}
with probability at least $1-\delta$
\end{lemma}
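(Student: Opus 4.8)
\textbf{Proof plan for Lemma \ref{lemma:cb_lowerbound}.}

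The plan is to lower bound $\text{CB}_t(x_a) = \alpha_t \lVert x_a \rVert_{\bA_t^{-1}}$ by exploiting the fact that arm $x$ has been pulled many times but $x_a$ has a nonzero component orthogonal to $x$, which $\bA_t$ cannot have ``shrunk'' by more than the $m$ pulls of the other arms. First I would decompose $x_a = b\, x + x_a^\perp$ where $b = \frac{x_a^\mt x}{x^\mt x}$ and $x_a^\perp \perp x$; note $\lVert x_a^\perp \rVert_2$ is a fixed positive constant since $x_a$ is not parallel to $x$. Then by the triangle inequality for the norm $\lVert \cdot \rVert_{\bA_t^{-1}}$, we get $\lVert x_a \rVert_{\bA_t^{-1}} \geq \lVert x_a^\perp \rVert_{\bA_t^{-1}} - b\,\lVert x \rVert_{\bA_t^{-1}}$ (absorbing signs into $b$, or using $|b|$). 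The second term is upper bounded exactly as in Lemma \ref{lemma:cb}: since $x$ is pulled $t-m$ times, $\lVert x \rVert_{\bA_t^{-1}} \leq \frac{1}{\sqrt{t-m+\lambda}}$. The first term needs a \emph{lower} bound on $\lVert x_a^\perp \rVert_{\bA_t^{-1}}$, which is the crux.

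For the lower bound on $\lVert x_a^\perp \rVert_{\bA_t^{-1}}$, the key observation is that in the subspace orthogonal to $x$, the matrix $\bA_t$ has only accumulated outer products from the $m$ pulls of non-$x$ arms (the $t-m$ pulls of $x$ contribute nothing in directions orthogonal to $x$), plus the regularizer $\lambda \bI$. More precisely, writing $P$ for the orthogonal projector onto $\mathrm{span}(x)^\perp$, we have $P \bA_t P \preceq \lambda \bI + \sum_{\tau: x_{a_\tau}\neq x} x_{a_\tau} x_{a_\tau}^\mt \preceq (m + \lambda)\bI$ restricted to the relevant subspace, because each $\lVert x_{a_\tau}\rVert_2 \leq 1$. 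This gives $\bA_t$ an upper bound in the Loewner order when tested against vectors in $\mathrm{span}(x)^\perp$; equivalently a \emph{lower} bound on $\lVert v \rVert_{\bA_t^{-1}}$ for $v \in \mathrm{span}(x)^\perp$. Concretely, for $v \perp x$ we want $v^\mt \bA_t^{-1} v \geq \frac{\lVert v \rVert_2^2}{m+\lambda}$; this follows from the variational characterization $v^\mt \bA_t^{-1} v = \max_{w} \frac{2 w^\mt v - w^\mt \bA_t w}{1}$ or more directly by choosing the test vector $w \propto v$ and using $w^\mt \bA_t w \leq (m+\lambda)\lVert w\rVert_2^2$ for $w \perp x$. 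Plugging $v = x_a^\perp$ and folding $\lVert x_a^\perp\rVert_2$ into the constant (the paper's statement implicitly treats it as absorbed, since $b$ and $\lVert x_a^\perp\rVert_2$ are both environment constants), we obtain $\lVert x_a^\perp \rVert_{\bA_t^{-1}} \geq \frac{c}{\sqrt{m+\lambda}}$ for a positive constant $c$; taking $c = 1$ in a normalized form matches the displayed inequality \eqref{eq:x_Anorm_lowerbound}. Combining the two pieces yields $\text{CB}_t(x_a) \geq \alpha_t\left(\frac{1}{\sqrt{m+\lambda}} - \frac{b}{\sqrt{t-m+\lambda}}\right)$.

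For the asymptotic form, I would substitute $\alpha_t = \Theta(\sqrt{\log(t/\delta)})$ from Theorem 2 of \cite{Improved_Algorithm} (already invoked in Lemma \ref{lemma:cb}) and absorb the additive $\lambda$ constants inside the square roots, noting $\sqrt{m+\lambda} = \Theta(\sqrt{m})$ and $\sqrt{t-m+\lambda} = \Theta(\sqrt{t-m})$ for the regime of interest, giving $\text{CB}_t(x_a) \geq \Theta\!\left(\sqrt{\log(t/\delta)}\left(\frac{1}{\sqrt{m}} - \frac{1}{\sqrt{t-m}}\right)\right)$. The high-probability qualifier is inherited from the event on which the $\alpha_t$ self-normalized bound holds, exactly as in Lemma \ref{lemma:cb}.

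The main obstacle is the lower bound on $\lVert x_a^\perp\rVert_{\bA_t^{-1}}$: unlike the routine upper bounds used throughout bandit analysis (where one drops positive terms from $\bA_t$ to get $\bA_t \succeq \bA'_t$), here we must argue $\bA_t$ is not \emph{too large} in the directions orthogonal to $x$, which requires carefully tracking that the $t-m$ dominant pulls of $x$ contribute zero in those directions and that the regularizer plus the remaining $m$ pulls give an $O(m+\lambda)$ operator-norm cap on the relevant block. One subtlety to handle cleanly is that $\bA_t^{-1}$ restricted to a subspace is not the inverse of the restriction of $\bA_t$ unless the subspace is invariant; I would address this either by the test-vector (variational) argument above, which sidesteps block-inversion entirely, or by a Schur-complement computation showing the cross terms between $\mathrm{span}(x)$ and its complement only help (increase $v^\mt \bA_t^{-1}v$ for $v\perp x$).
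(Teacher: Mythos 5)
Your proposal is correct and follows essentially the same route as the paper's proof: decompose $x_a$ into components parallel and perpendicular to $x$, apply the reverse triangle inequality for $\lVert\cdot\rVert_{\bA_t^{-1}}$, upper-bound the parallel term by $b/\sqrt{t-m+\lambda}$ exactly as in Lemma \ref{lemma:cb}, and lower-bound the perpendicular term by noting that the $t-m$ pulls of $x$ contribute nothing to $\bA_t$ in directions orthogonal to $x$, so that ${x_a^\perp}^\mt \bA_t x_a^\perp \leq m+\lambda$. Your variational (Cauchy--Schwarz) argument for passing from this to $\lVert x_a^\perp\rVert_{\bA_t^{-1}} \geq \lVert x_a^\perp\rVert_2/\sqrt{m+\lambda}$ is in fact a more careful justification of the step the paper writes as a chain of equalities, and correctly handles the subtlety you flag about restricting $\bA_t^{-1}$ to a non-invariant subspace.
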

\begin{proof}
Since $\text{CB}_t(x) = \alpha_t \lVert x \rVert_{\bA_t^{-1}}$, we need to show a lower bound of $\lVert x_a \rVert_{\bA_t^{-1}}$. Since $x_a \nparallel x$, we decompose $x_{a} = x_{a}^{\parallel} + x_{a}^{\perp}$, where $x_{a}^{\parallel} \parallel x$. By the reverse triangle inequality we have
\begin{align}
    \lVert x_a \rVert_{\bA_t^{-1}}&\geq \lVert x_a^\perp \rVert_{\bA_t^{-1}} - \lVert x_a^\parallel \rVert_{\bA_t^{-1}}
\end{align}

First we analyze the term $\lVert x_a^\perp \rVert_{\bA_t^{-1}}$. Decompose $\bA_t = (t-m)\times xx^\mt + \sum_{i, x_{a_i} \neq x} x_{a_i}x_{a_i}^\mt + \lambda\bI$ and let $\bA'_t = \sum_{i, x_{a_i} \neq x} x_{a_i}x_{a_i}^\mt + \lambda\bI$. Since $x_a^\perp \perp x$, we have \[{x_a^\perp}^\mt \bA_t x_a^\perp = {x_a^\perp}^\mt\bA'_t x_a^\perp.\] There are at most $m$ terms in the summation of $\bA'_t = \sum_{i, x_{a_i} \neq x} x_{a_i}x_{a_i}^\mt + \lambda I$, thus
\[
{x_a^\perp}^\mt\bA'_t x_a^\perp \leq {x_a^\perp}^\mt\left( \frac{m}{\Vert{x_a^\perp}\Vert_2^2}\times {x_a^\perp}{x_a^\perp}^\mt + \lambda\bI \right) x_a^\perp \leq m + \lambda
\]
Then we have
\begin{equation}\label{eq:x_Ainv_lb}
    \lVert x_a^\perp \rVert_{\bA_t^{-1}} = \sqrt{{x_a^\perp}^\mt\bA^{-1}_t x_a^\perp} = \sqrt{{x_a^\perp}^\mt\bA'^{-1}_t x_a^\perp} = \frac{1}{\sqrt{{x_a^\perp}^\mt\bA'_t x_a^\perp}} \geq \frac{1}{\sqrt{m + \lambda}}
\end{equation}

Similar to Eq~\eqref{eq:x_Ainv}, we have
\begin{equation}
    \lVert x_a^\parallel \rVert_{\bA_t^{-1}} \leq  \frac{\lVert x_a^\parallel \rVert_2}{\lVert x\rVert_2}\frac{1}{\sqrt{t-m+\lambda}}
\end{equation}
Let constant $b = \frac{\lVert x_a^\parallel \rVert_2}{\lVert x\rVert_2} = \frac{x_a^\mt x }{x^\mt x}$.
Substitute the terms and we have
\begin{align}
    \text{CB}_t(x) = \alpha_t \lVert x \rVert_{\bA_t^{-1}} \geq \alpha_t \left(\lVert x_a^\perp \rVert_{\bA_t^{-1}} - \lVert x_a^\parallel \rVert_{\bA_t^{-1}}\right) \geq \alpha_t \left(\frac{1}{\sqrt{m + \lambda}} -\frac{b}{\sqrt{t-m+\lambda}}\right).
\end{align}
\end{proof}
\begin{claim}\label{claim:cb_lowerbound}
Non-target arms are pulled $m = o(T)$ times till round $T$. According to Lemma~\ref{lemma:cb_lowerbound}, any arm $x_a \nparallel \tilde x$ satisfies
\begin{equation}
\text{CB}_t(x_a) \geq \Theta\left(\sqrt{\log (T/\delta)} \left(\frac{1}{\sqrt{o(T)}} -\frac{1}{\sqrt{T-o(T)}}\right)\right)   
\end{equation}
with probability at least $1-\delta$.
\end{claim}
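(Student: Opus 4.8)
The plan is to derive Claim~\ref{claim:cb_lowerbound} as a direct specialization of Lemma~\ref{lemma:cb_lowerbound}, so the work is almost entirely bookkeeping rather than new analysis. First I would invoke the hypothesis that under a successful attack the non-target arms are collectively pulled only $m = o(T)$ times up to round $T$, while the target arm $\tilde x$ is pulled $T-m = T-o(T)$ times. This is exactly the regime covered by Lemma~\ref{lemma:cb_lowerbound} with $x = \tilde x$: any arm $x_a$ that is not parallel to $\tilde x$ satisfies
\begin{equation*}
\text{CB}_t(x_a) \geq \alpha_t\left(\frac{1}{\sqrt{m+\lambda}} - \frac{b}{\sqrt{t-m+\lambda}}\right),
\end{equation*}
with probability at least $1-\delta$, where $b = x_a^\mt \tilde x / (\tilde x^\mt \tilde x)$ is a fixed constant depending only on the geometry of the arm set.

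Next I would plug in the asymptotic order of $\alpha_t$ recorded in Lemma~\ref{lemma:cb}, namely $\alpha_t = \Theta(\sqrt{\log(t/\delta)})$, and observe that since $b$, $\lambda$ are constants and $t \le T$, the additive $\lambda$ inside the square roots and the constant factor $b$ only affect the bound by constant multiplicative/additive factors that are absorbed into the $\Theta(\cdot)$ notation. Taking $t = T$ (or any $t$ of order $T$ at which $\tilde x$ is pulled) and substituting $m = o(T)$, $t - m = T - o(T)$ yields
\begin{equation*}
\text{CB}_t(x_a) \geq \Theta\!\left(\sqrt{\log(T/\delta)}\left(\frac{1}{\sqrt{o(T)}} - \frac{1}{\sqrt{T-o(T)}}\right)\right),
\end{equation*}
which is exactly the claimed inequality.

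There is essentially no real obstacle here — the claim is a corollary, and the substantive content already lives in Lemma~\ref{lemma:cb_lowerbound}. The only point that needs a sentence of care is making sure the constant $b = x_a^\mt\tilde x/\Vert\tilde x\Vert_2^2$ is genuinely bounded (which follows from $\Vert x_a\Vert_2 \le 1$ and $\tilde x$ being a fixed nonzero arm vector, so $b$ is a finite environment-dependent constant), and that the difference of the two reciprocal-square-root terms is the quantity that ultimately matters: as $T \to \infty$, $1/\sqrt{o(T)}$ dominates $1/\sqrt{T-o(T)}$, so the right-hand side is strictly positive for all large $T$. That positivity is the punchline the later necessity argument needs, when it is compared against the matching $O(\sqrt{\log(T/\delta)/(T-o(T))})$ upper bound on $\text{CB}_t(\tilde x)$ from Claim~\ref{claim:cb}. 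I would close by noting explicitly that this gives $\text{CB}_t(x_a) - \text{CB}_t(\tilde x) > 0$ for all sufficiently large $t$, which is the inequality that contradicts \eqref{eq:arm_selection} and hence establishes the necessity direction of Theorem~\ref{theorem:characterization}.
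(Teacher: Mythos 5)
Your proposal is correct and matches the paper's treatment: Claim~\ref{claim:cb_lowerbound} is indeed just Lemma~\ref{lemma:cb_lowerbound} instantiated with $x=\tilde x$, $m=o(T)$, $t-m=T-o(T)$, and $\alpha_t=\Theta(\sqrt{\log(t/\delta)})$, with the constants $b$ and $\lambda$ absorbed into the $\Theta(\cdot)$. The paper offers no additional argument beyond this substitution, so your bookkeeping (including the remark that $1/\sqrt{o(T)}$ eventually dominates, which is what the necessity proof later exploits) is exactly what is needed.
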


\begin{lemma}\label{claim:theta_parallel}
Suppose the non-target arms $\{x_a \neq \tilde x\}$ are pulled $o(T)$ times, the arm $\tilde x$ is pulled $T-o(T)$ times, and the total manipulation is $C_T$. With probability at least $1-\delta$, reward estimation error satisfies 
\begin{equation}
\left\vert x^\mt \hat{\btheta}_{T,\parallel} - x^\mt {\btheta}^*_\parallel \right\vert\leq \frac{C_T}{T - o(T)} + \frac{\alpha_t}{\sqrt{T - o(T)}}, \qquad \forall x \in \mathcal{A}.
\end{equation}

\end{lemma}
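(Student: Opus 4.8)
The plan is to decompose the estimation error $x^\mt\hat\btheta_{T,\parallel} - x^\mt\btheta^*_\parallel$ into a ``noise plus corruption'' term and a ``regularization bias'' term, and bound each using the fact that $\tilde x$ has been pulled $n = T-o(T)$ times. First I would write the ridge estimator $\hat\btheta_T = \bA_T^{-1}\sum_{i=1}^T x_{a_i}\tilde r_i$ and note that the $\parallel$-component is obtained by projecting onto $\tilde x$; equivalently, since $x$ in the statement can be taken to be (a multiple of) $\tilde x$ for the purpose of the $\parallel$-component, it suffices to control $\tilde x^\mt(\hat\btheta_T - \btheta^*)$ restricted to the $\tilde x$-direction. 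Substituting $\tilde r_i = x_{a_i}^\mt\btheta^* + \eta_i + \Delta_i$ gives $\hat\btheta_T - \btheta^* = \bA_T^{-1}\big(\sum_i x_{a_i}(\eta_i+\Delta_i)\big) - \lambda\bA_T^{-1}\btheta^*$. The key geometric observation is that the $o(T)$ pulls of non-target arms contribute a direction essentially orthogonal to (or at least not aligned with) $\tilde x$, so that when we look at the $\tilde x$-direction, the effective number of samples is $n = T-o(T)$, making $\lVert \tilde x\rVert_{\bA_T^{-1}} \le 1/\sqrt{n}$ as in Lemma \ref{lemma:cb}.

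The main steps, in order: (1) isolate the $\parallel$-component by projecting, and observe $\tilde x^\mt \bA_T^{-1} \tilde x \le 1/n$ by the same $\bA_T \succ n\,\tilde x\tilde x^\mt + \lambda\bI$ argument used in Lemma \ref{lemma:cb}; (2) bound the stochastic term $\big|\sum_i (\tilde x^\mt\bA_T^{-1}x_{a_i})\eta_i\big|$ by a self-normalized concentration inequality (the standard martingale bound from \cite{Improved_Algorithm}), which yields the $\alpha_t/\sqrt{n} = \alpha_t/\sqrt{T-o(T)}$ term — this is where the high-probability $1-\delta$ enters; (3) bound the corruption term $\big|\sum_i (\tilde x^\mt\bA_T^{-1}x_{a_i})\Delta_i\big| \le \sum_i|\Delta_i|\cdot \lVert \tilde x\rVert_{\bA_T^{-1}}\lVert x_{a_i}\rVert_{\bA_T^{-1}}$ by Cauchy–Schwarz in the $\bA_T^{-1}$ norm, then use $\lVert x_{a_i}\rVert_{\bA_T^{-1}}\le \lVert x_{a_i}\rVert_2/\sqrt{\lambda}\le 1/\sqrt{\lambda}$ together with $\lVert\tilde x\rVert_{\bA_T^{-1}}\le 1/\sqrt n$; a cleaner route that matches the claimed bound exactly is to absorb everything into $\lVert\hat\btheta_T - \btheta^*\rVert_{\bA_T}$ via Lemma \ref{lemma:robustrness_ridgereg} and then write $|x^\mt(\hat\btheta_T-\btheta^*)|_{\parallel} \le \lVert\tilde x\rVert_{\bA_T^{-1}}\lVert\hat\btheta_T-\btheta^*\rVert_{\bA_T}$, giving $(C_T/\sqrt\lambda + \alpha_t)/\sqrt n$, and then note $C_T/\sqrt{\lambda n}$ is dominated by $C_T/n$ for the regime of interest (or simply keep $\lambda$ a constant); (4) bound the bias term $\lambda|\tilde x^\mt\bA_T^{-1}\btheta^*|\le \lambda\lVert\tilde x\rVert_{\bA_T^{-1}}\lVert\btheta^*\rVert_{\bA_T^{-1}}$, which is again $O(1/\sqrt n)$ and folds into the $\alpha_t/\sqrt{T-o(T)}$ term since $\alpha_t$ already contains the $\sqrt\lambda$ contribution; (5) combine and take a union bound over the single concentration event to get probability $1-\delta$, noting the bound holds uniformly for all $x\in\mathcal A$ because the $\parallel$-component of every arm is a scalar multiple of the unit vector $\tilde x/\lVert\tilde x\rVert_2$ and $\lVert x\rVert_2\le 1$.

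The main obstacle I anticipate is step (1)/(3): making precise the claim that the $o(T)$ non-target pulls do not degrade the effective sample count in the $\tilde x$-direction, and correctly accounting for the cross terms $\tilde x^\mt\bA_T^{-1}x_{a_i}$ with $x_{a_i}\ne\tilde x$ — these are not zero because $\bA_T^{-1}$ mixes directions, so the naive ``project first'' intuition needs the operator-monotonicity argument $\bA_T \succeq (T-o(T))\tilde x\tilde x^\mt + \lambda\bI$ to push through rigorously. Everything else is routine: the self-normalized bound and the Cauchy–Schwarz manipulations are standard, and the uniformity over $\mathcal A$ is immediate once the single-direction bound is established. I expect the cleanest write-up to lean on Lemma \ref{lemma:robustrness_ridgereg} for the corruption accounting rather than re-deriving it, so that the proof of this lemma reduces to combining that bound with $\lVert\tilde x\rVert_{\bA_T^{-1}}\le 1/\sqrt{T-o(T)}$.
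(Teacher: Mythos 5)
Your overall architecture matches the paper's: project onto the $\tilde x$-direction, substitute $\tilde r_i = x_{a_i}^\mt\btheta^* + \eta_i + \Delta_i$ into the ridge estimator, and split the error into a corruption term, a noise term, and a regularization bias, handling the noise term with the self-normalized martingale bound so that it and the bias fold into $\alpha_t/\sqrt{T-o(T)}$. That part of your plan is sound and is exactly what the paper does.

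The gap is in your step (3), the corruption term, and it is a real one. Both of your proposed routes --- Cauchy--Schwarz in the $\bA_T^{-1}$-norm, or absorbing the corruption into $\Vert\hat\btheta_T - \tilde\btheta\Vert_{\bA_T}$ via Lemma~\ref{lemma:robustrness_ridgereg} and then multiplying by $\Vert\tilde x\Vert_{\bA_T^{-1}} \le 1/\sqrt{n}$ --- yield a bound of order $C_T/\sqrt{\lambda n}$ with $n = T - o(T)$. Your claim that ``$C_T/\sqrt{\lambda n}$ is dominated by $C_T/n$'' reverses the inequality: for $n \ge \lambda$ one has $C_T/n \le C_T/\sqrt{\lambda n}$, so what you obtain is strictly \emph{weaker} than the stated $C_T/(T-o(T))$. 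This is not a cosmetic loss: the lemma is invoked in the necessity proof of Theorem~\ref{theorem:characterization} with $C_T = o(T)$, where $C_T/(T-o(T)) \to 0$ is essential, whereas $C_T/\sqrt{T-o(T)}$ need not vanish (take $C_T = T^{0.9}$). The paper gets the sharper $1/(T-o(T))$ rate by a different mechanism: it bounds the \emph{Euclidean} operator norm $\Vert\tilde x^\mt\bA_T^{-1}\Vert_2 \le 1/(T-o(T))$, exploiting that $\bA_T$ contains $T-o(T)$ copies of $\tilde x\tilde x^\mt$, and pairs this with $\Vert\sum_t x_t\Delta_t\Vert_2 \le C_T$; the key is that the corruption vector is controlled in $\ell_2$, not in the $\bA_T^{-1}$-norm, so no factor of $1/\sqrt{\lambda}$ per unit of corruption is paid. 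To repair your proof you would need to replace the Cauchy--Schwarz step with this (or an equivalent) argument giving $\Vert\bA_T^{-1}\tilde x\Vert_2 = O(1/n)$ rather than $\Vert\tilde x\Vert_{\bA_T^{-1}} = O(1/\sqrt{n})$.
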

\begin{proof}

\begin{align*}
 \Vert\hat\btheta_{T,\parallel} - \btheta^*_\parallel\Vert_2  &= \left\Vert\frac{\tilde x^\mt (\hat\btheta_T - \btheta^*)}{\lVert\tilde x\rVert^2_2}\tilde x\right\Vert_2\\
 &=\frac{1}{{\lVert\tilde x\rVert^2_2}}\left\Vert\tilde x^\mt \bA_t^{-1}\left( \sum_{t=1}^T x_t\tilde r_t(x_t) -  \bA_t\btheta^*\right)\tilde x\right\Vert_2 \\
&=\frac{1}{{\lVert\tilde x\rVert^2_2}}\left\Vert\tilde x^\mt \bA_t^{-1}\left( \sum_{t=1}^T x_t(\tilde r_t(x_t) -  x_t^\mt\btheta^*)-\lambda\btheta^*\right)\tilde x\right\Vert_2 \\
&\leq \frac{1}{{\lVert\tilde x\rVert^2_2}}\left\Vert\tilde x^\mt \bA_t^{-1}\left(\sum_{t=1}^T x_t\Delta_t+\sum_{t=1}^T x_t\eta_t -\lambda\btheta^*\right)\tilde x\right\Vert_2\\
&\leq \frac{1}{{\lVert\tilde x\rVert^2_2}}\left\Vert\tilde x^\mt \bA_t^{-1} \left(\sum_{t=1}^T x_t\Delta_t\right)\tilde x\right\Vert_2+ \frac{1}{{\lVert\tilde x\rVert^2_2}} \left\Vert\tilde x^\mt \bA_t^{-1/2} \bA_t^{-1/2} \left(\sum_{t=1}^T x_t\eta_t-\lambda\btheta^*\right)\tilde x\right\Vert_2\\
&\leq \frac{C_T}{T-o(T)} + \frac{\alpha_t}{\sqrt{T - o(T)}}
\end{align*} 
Note that $\lVert\tilde x\rVert_2 \leq 1$. In the last step, the first term is because there are $T-o(T)$ number of $\tilde x \tilde x^\mt$ in $A_t$ and $\Vert\tilde x^\mt \bA_t^{-1}\Vert_2\leq \frac{1}{T-o(T)}$, and $ \Vert\sum_{t=1}^T x_t\Delta_t\Vert_2 $ is bounded by total manipulation $C_T$. Similarly, in the second term we have $\Vert\tilde x^\mt \bA_t^{-1/2}\Vert_2\leq \frac{1}{\sqrt{T-o(T)}}$, and  $\Vert\bA_t^{-1/2} \left(\sum_{t=1}^T x_t\eta_t-\lambda\btheta^*\right)\Vert_2 \leq \Vert\bA_t^{-1/2} \left(\sum_{t=1}^T x_t\eta_t \right)\Vert_2 + \Vert\bA_t^{-1/2}  \lambda\btheta^* \Vert_2 = \Vert \sum_{t=1}^T x_t\eta_t \Vert_{\bA_t^{-1}} + \Vert  \lambda\btheta^*\Vert_{\bA_t^{-1}} \leq \alpha_t$  is the self-normalized bound for vector-valued martingales following Theorem 1 in ~\cite{Improved_Algorithm}. 
\end{proof}

Now we are ready to prove that the index $\epsilon^*$ in CQP \eqref{lp:attackability} being positive is the necessary condition of an attackable environment. 

\begin{proof} [Proof of Necessity of Theorem \ref{theorem:characterization}]
We prove if $\epsilon^* \leq 0$,  the bandit environment is not attackable. To prove this, we show that there exists some no-regret bandit algorithm (LinUCB in particular) such that no attacking strategy can succeed. In particular, we will show that LinUCB (with a specific choice of its L2-regularization parameter $\lambda$) is robust under any attacking strategy with $o(T)$ cost when $\epsilon^*\leq 0$. 
We prove it by contradiction: assume LinUCB is attackable with $o(T)$ cost when $\epsilon^*\leq 0$. According to Definition \ref{def:attackable}, the target arm $\tilde x$ will be pulled $T-o(T)$ times for infinitely many different time horizons $T$, and the following inequalities hold when arm $\tilde x$ is pulled by LinUCB:
\begin{align}\label{eq:arm_selection_supp}
\quad & \tilde x^\mt   \hat\btheta_{T,\parallel} + \text{CB}_T(\tilde x) >  x_a^\mt   \hat\btheta_{T,\parallel} + x_a^\mt   \hat\btheta_{T,\perp} + \text{CB}_T(x_a), \forall   x_a \not = \tilde{x}
 \end{align}
where  $\hat\btheta_t$ is LinUCB's estimated parameter at round $t$ based on the attacked rewards. We decompose $\hat\btheta_T = \hat\btheta_{T,\parallel} + \hat\btheta_{T,\perp}$, where $\tilde x \perp \hat\btheta_{t, \perp} $ and $ \tilde x \parallel \hat\btheta_{T,\parallel} $.
We will now show that the above inequalities lead to
\begin{align*} 
\quad & \tilde x^\mt    \btheta^*_{\parallel}    >  x_a^\mt    \btheta^*_{\parallel}  + x_a^\mt   \hat\btheta_{T,\perp}, \forall   x_a \not = \tilde{x}
 \end{align*}
when $T \to \infty$.


By Lemma~\ref{claim:theta_parallel} we have
\begin{align*}
 x_a^\mt \hat{\btheta}_{T,\parallel}  &\geq x_a^\mt {\btheta}^*_\parallel - \frac{C_T}{T - o(T)} - \frac{\alpha_T}{\sqrt{T - o(T)}}\\
\tilde x^\mt \hat{\btheta}_{T,\parallel}  &\leq \tilde x^\mt {\btheta}^*_\parallel + \frac{C_T}{T - o(T)} + \frac{\alpha_T}{\sqrt{T - o(T)}}
\end{align*}

Substitute them  back and we have that with probability at least $1-2\delta$,
\begin{align}\label{eq:necessity_2delta}
\quad & \tilde x^\mt    \btheta^*_{\parallel}    >  x_a^\mt    \btheta^*_{\parallel}  + x_a^\mt\hat\btheta_{T,\perp} + \text{CB}_T(x_a)  -  \text{CB}_T(\tilde x)  - \frac{2C_T}{T - o(T)} -\frac{2\alpha_T}{\sqrt{T - o(T)}} , \forall x_a \not = \tilde{x}
\end{align}


Let us first consider the case of $x_a \nparallel \tilde x$. Substitute Claim~\ref{claim:cb} and  Claim~\ref{claim:cb_lowerbound} back and we have with probability at least $1-4\delta$
\begin{align*}
\tilde x^\mt    \btheta^*_{\parallel}    >&  x_a^\mt    \btheta^*_{\parallel}  + x_a^\mt\hat\btheta_{T,\perp} +   \Theta\left(\sqrt{\log (T/\delta)} \left(\frac{1}{\sqrt{o(T)}} -\frac{1}{\sqrt{T-o(T)}}\right)\right)\\
&-O\left(\sqrt{\frac{\log (T/\delta)}{T-o(T)}}\right) - \frac{2C_T}{T - o(T)} -\frac{2\alpha_T}{\sqrt{T - o(T)}} , \forall x_a \nparallel \tilde{x}
\end{align*}
Taking $T\to\infty$ and noticing that the last three terms diminish to $0$ faster than the third term, there must exists a $T_0$ such that for any $T>T_0$,
\begin{align}\label{eq:necessary_appendix_ineq}
\quad & \tilde x^\mt    \btheta^*_{\parallel}    >  x_a^\mt    \btheta^*_{\parallel}  + x_a^\mt   \hat\btheta_{T,\perp}, \forall   x_a \not = \tilde{x}
 \end{align}
satisfies when $x_a \nparallel \tilde{x}$.

Now we consider the special case that some $x_a \parallel \tilde x$ and show that the above inequality is still strict. Let $x_a = c \tilde x$. If $|c| > 1$, we have $\text{CB}_T(x_a) - \text{CB}_T(\tilde x) = (c-1)\text{CB}_T(\tilde x) > 0$. If $|c| < 1$, since $\tilde x$ is pulled linear times for any large $t$ with sublinear cost, then  $\tilde x^\mt\hat\theta_{t,\parallel} > 0$; otherwise the cost would be linear. We directly have $\tilde x^\mt\hat\theta_{t,\parallel} =  x_a^\mt\hat\theta_{t,\parallel} + (1-c)\tilde x^\mt\hat\theta_{t,\parallel} >  x_a^\mt\hat\theta_{t,\parallel}$. This leads to $\tilde x^\mt\theta^*_\parallel >  x_a^\mt\theta^*_\parallel$ (inequality~\eqref{eq:necessary_appendix_ineq}) since $x_a \perp \hat\btheta_{T, \perp}$.

Combining the two cases, we know there must exist a $\hat\btheta_{T,\perp}$ that satisfies inequality~\eqref{eq:necessary_appendix_ineq} (the first constraint of CQP \eqref{lp:attackability}), $\tilde x \perp \hat\btheta_{T,\perp}$ (the second constraint  of CQP \eqref{lp:attackability}), and makes the objective of CQP \eqref{lp:attackability} larger than $0$.
To satisfy the last constraint, we consider LinUCB with the choice of L2-regularization parameter $\lambda$ that guarantees $\Vert\hat\btheta_t\Vert_2 < 1$ given the data for large enough $T$ and any $t<T$. Note that ridge regression is equivalent to minimizing square loss under some constraint $\Vert\hat\btheta_t\Vert_2 \leq c$, and there always exists a one-to-one correspondence between $\lambda$ and $c$ (one simple way to show the correspondence is using KKT conditions). Therefore, we are guaranteed to find a $\lambda$ that gives us $c = 1 - \zeta$ where $\zeta$ is an arbitrarily small constant. Then we know that $\hat\btheta_{T,\perp}$ satisfies $\Vert \hat\btheta_T = \hat\btheta_{T, \parallel}  +\hat\btheta_{T,\perp}\Vert_2 \leq c < 1$. We prove the last constraint $\Vert \tilde\btheta = \btheta^*_{\parallel}  +\hat\btheta_{T,\perp}\Vert_2 \leq 1$ by the fact that $\Vert\hat\btheta_{T, \parallel}  +\hat\btheta_{T,\perp}\Vert_2 < 1$ and $\Vert\btheta^*_{\parallel} -  \hat\btheta_{T, \parallel} \Vert_2$ is arbitrarily small for large enough $T$ according to Lemma~\ref{claim:theta_parallel}.

Now we proved that there exists a $\hat\btheta_{T, \perp}$ that satisfies all the constraints in CQP~\eqref{lp:attackability} with positive objective, which means the optimal objective $\epsilon^*$ must also be positive. This however contradicts our assumption $\epsilon^*\leq 0$, implying that such LinUCB is not attackable by any attack strategy. 
\end{proof}


\section{Details on Effective Attacks Without Knowledge of Model Parameters}
We now prove the theorems of using Two-stage Null Space Attack (Algorithm~\ref{alg:attack}) against LinUCB and Robust Phase Elimination.

\subsection{Proof of Theorem~\ref{theorem:attackLinUCB}}\label{sec:prooftheorem1}
\begin{proof}[Proof of Theorem~\ref{theorem:attackLinUCB}]

We first prove that for a large enough $T$, Algorithm \ref{alg:attack}  will correctly assert the attackability with probability at least $1-\delta$. We rely on the following lemma to show  $\tilde\btheta_\parallel$ estimated in step 11 of Algorithm \ref{alg:attack} is close to the true parameter $\btheta^*_\parallel$.

\begin{lemma}[Estimation error of $\tilde\btheta_\parallel$]\label{lemma:theta_parallel}
Algorithm~\ref{alg:attack} estimates $\btheta^*_\parallel$ by
\begin{equation}
    \tilde\btheta_\parallel =   \frac{\sum_{i=1}^{n(\tilde x)} r_i(\tilde x)}{n(\tilde x)\Vert\tilde x\Vert_2^2}\tilde x. 
\end{equation}
With probability at least $1 - \delta$, the estimation error is bounded by 
\begin{equation}
    \Vert  \tilde\btheta_\parallel -  \btheta^*_\parallel \Vert_2 \leq \sqrt{\frac{2R^2\log(1/\delta)}{n}}
\end{equation}
where the rewards have  $R$-sub-Gaussian noise.
\end{lemma}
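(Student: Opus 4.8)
The plan is to prove Lemma~\ref{lemma:theta_parallel} as a fairly direct consequence of a sub-Gaussian concentration (Hoeffding-type) bound applied to the average of the $n = n(\tilde x)$ unattacked rewards collected on the target arm. First I would note that during the first stage, every time the bandit algorithm pulls $\tilde x$, the adversary observes the \emph{true} (unattacked) reward $r_i(\tilde x) = \tilde x^\mt \btheta^* + \eta_i$, where the $\eta_i$ are independent $R$-sub-Gaussian noise terms. Hence the empirical mean $\hat\mu = \frac{1}{n}\sum_{i=1}^{n} r_i(\tilde x)$ satisfies $\hat\mu = \tilde x^\mt\btheta^* + \frac{1}{n}\sum_{i=1}^n \eta_i$, and $\frac{1}{n}\sum_{i=1}^n \eta_i$ is $\frac{R}{\sqrt n}$-sub-Gaussian. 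By the standard sub-Gaussian tail bound, with probability at least $1-\delta$,
\[
\left| \hat\mu - \tilde x^\mt\btheta^* \right| \le \sqrt{\frac{2R^2\log(1/\delta)}{n}}.
\]

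Next I would translate this scalar error on $\tilde x^\mt\btheta^*$ into the claimed vector error on $\btheta^*_\parallel$. By definition, $\btheta^*_\parallel = \frac{\tilde x^\mt\btheta^*}{\Vert\tilde x\Vert_2^2}\tilde x$ and the estimator is $\tilde\btheta_\parallel = \frac{\hat\mu}{\Vert\tilde x\Vert_2^2}\tilde x = \frac{\sum_{i=1}^n r_i(\tilde x)}{n\Vert\tilde x\Vert_2^2}\tilde x$. Subtracting and taking norms,
\[
\Vert \tilde\btheta_\parallel - \btheta^*_\parallel \Vert_2 = \frac{\left| \hat\mu - \tilde x^\mt\btheta^*\right|}{\Vert\tilde x\Vert_2^2}\,\Vert\tilde x\Vert_2 = \frac{\left| \hat\mu - \tilde x^\mt\btheta^*\right|}{\Vert\tilde x\Vert_2} \le \left| \hat\mu - \tilde x^\mt\btheta^*\right|,
\]
where the last inequality uses $\Vert\tilde x\Vert_2 \le 1$ (in fact one only needs $\Vert\tilde x\Vert_2 \ge$ something, but the normalization assumption $\Vert x_i\Vert_2 \le 1$ already gives $\Vert\tilde x\Vert_2 \le 1$, and combined with the bound above one recovers exactly the stated inequality; if a sharper constant is desired one keeps the $1/\Vert\tilde x\Vert_2$ factor). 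Combining with the concentration bound of the previous paragraph yields $\Vert \tilde\btheta_\parallel - \btheta^*_\parallel \Vert_2 \le \sqrt{2R^2\log(1/\delta)/n}$ with probability at least $1-\delta$, which is the claim.

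There is no serious obstacle here — the lemma is essentially a packaging of Hoeffding's inequality. The one point that needs a sentence of care is that $n = n(\tilde x)$ is itself a random quantity (it depends on the algorithm's behavior under the stage-one attack), so one must make sure the concentration statement is applied correctly: either condition on the pull sequence and use the fact that the noise terms $\eta_i$ are independent of the (attacked) reward history that determines the pulls, or invoke a time-uniform / anytime version of the sub-Gaussian bound so that the inequality holds simultaneously for all values of $n$. I would use the latter, remarking that a union bound over $n \le T_1$ only changes $\log(1/\delta)$ to $\log(T_1/\delta)$, which is absorbed in the $\tilde O(\cdot)$ notation used in Theorem~\ref{theorem:attackLinUCB}; alternatively, since the subsequent use of this lemma only needs it at the single realized value of $n$ after round $T_1$, conditioning on that value suffices and the clean bound as stated holds. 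I would then close by noting this is exactly the error control invoked in the proof sketch of Theorem~\ref{theorem:attackLinUCB} to guarantee that solving CQP~\eqref{lp:attackability} with $\tilde\btheta_\parallel$ in place of $\btheta^*_\parallel$ recovers the correct sign of the attackability index once $T_1$ is large enough that this error drops below $\epsilon^*$.
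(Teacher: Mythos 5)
Your proof follows essentially the same route as the paper's: apply Hoeffding's inequality to the average of the $n(\tilde x)$ unattacked target-arm rewards and convert the scalar error on $\tilde x^\mt\btheta^*$ into the vector error on $\btheta^*_\parallel$ using the known direction $\tilde x$ (the paper additionally notes, as you do, that only the scalar coefficient needs to be estimated). One small correction: the step $\frac{|\hat\mu - \tilde x^\mt\btheta^*|}{\Vert\tilde x\Vert_2} \le |\hat\mu - \tilde x^\mt\btheta^*|$ requires $\Vert\tilde x\Vert_2 \ge 1$, not $\Vert\tilde x\Vert_2 \le 1$; the paper's own proof closes this by taking $\Vert\tilde x\Vert_2^2 = 1$, so you should either assume the target arm is unit-norm or retain the $1/\Vert\tilde x\Vert_2$ factor as you suggest parenthetically. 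Your remark on the randomness of $n(\tilde x)$ is a welcome point of care that the paper elides.
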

\begin{proof}

$\btheta^*_\parallel$ is the projected vector of $\btheta^*$ onto $\tilde x$, which is
\[
\btheta^*_\parallel =   \frac{\tilde x^\mt \btheta^*}{\Vert\tilde x\Vert_2^2}\tilde x 
\] as defined in Eq~\eqref{eq:theta_parallel}. Though we need to estimate the vector $\tilde\btheta_\parallel\in \mathbb{R}^d$, we actually only need to estimate the scale  of it by $\hat l = \frac{\sum_{i=1}^{n(\tilde x)} r_i(\tilde x)}{n(\tilde x)\Vert\tilde x\Vert_2^2}$, since the direction is known to be $\tilde x$. 
Based on Hoeffding’s inequality, the estimation error of averaged rewards on $\tilde x$ is bounded by \begin{equation}
    P\left(\left\lvert \frac{\sum_{i=1}^{n(\tilde x)} r_i(\tilde x)}{n(\tilde x)} - r^*(\tilde x) \right\rvert \geq \sqrt{\frac{2R^2\log(1/\delta)}{n(\tilde x)}}\right) \leq \delta
\end{equation}
where $r^*(\tilde x) = \tilde x^\mt \btheta^*$.
Considering $\Vert\tilde x\Vert_2^2 = 1$ and we finish the proof.
\end{proof} 
In the first stage, the bandit algorithm will pull the target arm $\tilde x$ for $T_1 -O(\sqrt{T_1})$ times, since $\tilde x$ is the best arm according to $\btheta_0$. According to Lemma~\ref{lemma:theta_parallel}, with probability at least $1-\delta$ the estimation error  is bounded as
\[\Vert\tilde\btheta_\parallel -  \btheta^*_\parallel \Vert_2 \leq \sqrt{\frac{2R^2\log(1/\delta)}{T_1 -O(\sqrt{T_1})}}.\]
As a result, with a large enough $T_1$, the error of $\tilde x$'s reward estimation satisfies
\[
\lvert\tilde x^\mt\tilde\btheta_\parallel - \tilde x^\mt\btheta^*_\parallel\rvert \leq \Vert\tilde x \Vert_2 \Vert\tilde\btheta_\parallel -  \btheta^*_\parallel \Vert_2 \leq \sqrt{\frac{2R^2\log(1/\delta)}{T_1 -O(\sqrt{T_1})}} \leq \epsilon^*.
\] 
Thus solving CQP \eqref{lp:attackability} with $\tilde\btheta_\parallel$ replacing $\btheta^*_\parallel$ and we could correctly assert attackability with an estimated positive index $\tilde\epsilon^*$ when the environment is attackable with index $\epsilon^*$. 

\begin{remark}
 From the analysis above, we notice that the adversary requires sufficiently large $T_1$ to collect enough rewards on the target arm, in order to make the correct attackability assertion. When $T_1$ is not large enough, the attackability test may have false positive or false negative conclusions. We empirically test the error rate and report the results in Additional Experiments section. 

\end{remark} 

We now prove the success and total cost of the proposed attack.  The analysis relies on the ``robustnes'' property of LinUCB stated in Lemma~\ref{lemma:robustrness_ridgereg}, which is restated and proved below.

\begin{lemma}[Robustness of ridge regression]
Consider LinUCB with ridge regression under poisoning attack. Let $S'_t = \sum_{\tau\in\{1 \dots t\}, x_{a_\tau}\neq\tilde x} |\Delta_\tau|$ be the total corruption on non-target arms until time $t$ and assume every corruption on target arm is bounded by $\gamma$. Then for any $t= 1 \dots T$, with probability at least $1-\delta$ we have 
 \begin{equation}
    \Vert \tilde\btheta- \hat{\btheta}_t\Vert _{A_t} \leq \alpha_t + S'_t/\sqrt{\lambda} + \gamma\sqrt{t}
 \end{equation} 
 where $\alpha_t = \sqrt{d\log\left(\frac{1+t/\lambda}{\delta}\right)} + \sqrt{\lambda}$.  
\end{lemma}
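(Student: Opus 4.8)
\textbf{Proof plan for Lemma \ref{lemma:robustrness_ridgereg} (robustness of ridge regression).}

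The plan is to write the ridge estimator $\hat{\btheta}_t = \bA_t^{-1}\sum_{\tau=1}^t x_{a_\tau}\tilde r_\tau$ where the observed rewards are $\tilde r_\tau = x_{a_\tau}^\mt \tilde\btheta + \eta_\tau + \Delta_\tau$ (treating $\tilde\btheta$ as the ``intended'' ground truth and $\Delta_\tau$ as the corruption relative to it). First I would substitute this into $\hat{\btheta}_t - \tilde\btheta$ and, exactly as in the standard self-normalized analysis, decompose
\[
\hat{\btheta}_t - \tilde\btheta = \bA_t^{-1}\Big(\sum_{\tau=1}^t x_{a_\tau}\eta_\tau\Big) - \lambda \bA_t^{-1}\tilde\btheta + \bA_t^{-1}\Big(\sum_{\tau=1}^t x_{a_\tau}\Delta_\tau\Big).
\]
Then take the $\bA_t$-norm and apply the triangle inequality to split it into three pieces: the martingale/noise term plus the regularization term, and the corruption term.

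The first two terms together give the usual LinUCB confidence radius: by Theorem 1 of \cite{Improved_Algorithm} (the self-normalized bound for vector-valued martingales) combined with $\Vert\lambda\bA_t^{-1}\tilde\btheta\Vert_{\bA_t} = \lambda\Vert\tilde\btheta\Vert_{\bA_t^{-1}} \le \sqrt{\lambda}\Vert\tilde\btheta\Vert_2 \le \sqrt{\lambda}$, these contribute at most $\alpha_t = \sqrt{d\log\!\big(\tfrac{1+t/\lambda}{\delta}\big)} + \sqrt{\lambda}$ with probability at least $1-\delta$. For the corruption term I would further split the sum $\sum_\tau x_{a_\tau}\Delta_\tau$ into the contribution of non-target arms and that of the target arm $\tilde x$. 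For the non-target part, bound $\big\Vert \sum_{\tau: x_{a_\tau}\neq \tilde x} x_{a_\tau}\Delta_\tau \big\Vert_{\bA_t^{-1}} \le \tfrac{1}{\sqrt{\lambda}}\big\Vert\sum_{\tau: x_{a_\tau}\neq\tilde x} x_{a_\tau}\Delta_\tau\big\Vert_2 \le \tfrac{1}{\sqrt{\lambda}}\sum_{\tau: x_{a_\tau}\neq\tilde x}|\Delta_\tau| = S'_t/\sqrt{\lambda}$, using $\bA_t^{-1}\preceq \tfrac1\lambda \bI$ and $\Vert x_{a_\tau}\Vert_2\le 1$. For the target-arm part, since each such $|\Delta_\tau|\le\gamma$, I would write $\big\Vert\sum_{\tau: x_{a_\tau}=\tilde x} x_{a_\tau}\Delta_\tau\big\Vert_{\bA_t^{-1}} = \big\Vert (\sum_\tau \Delta_\tau)\tilde x\big\Vert_{\bA_t^{-1}}$; but a cleaner route is to bound $\Vert\sum_\tau x_{a_\tau}\Delta_\tau\Vert_{\bA_t^{-1}} \le \sum_\tau |\Delta_\tau|\,\Vert x_{a_\tau}\Vert_{\bA_t^{-1}}$ and use $\sum_{\tau=1}^t \Vert x_{a_\tau}\Vert_{\bA_t^{-1}}^2 \le \sum_{\tau=1}^t \Vert x_{a_\tau}\Vert_{\bA_\tau^{-1}}^2 = O(d\log t)$ together with Cauchy–Schwarz — however, to match the stated $\gamma\sqrt t$ bound it is simpler to just use $\Vert x_{a_\tau}\Vert_{\bA_t^{-1}}\le \Vert x_{a_\tau}\Vert_2/\sqrt\lambda$, which is loose; the tightest elementary bound giving exactly $\gamma\sqrt t$ uses $\Vert\sum_{\tau}\Delta_\tau \tilde x\Vert_{\bA_t^{-1}}$ with $\tilde x$ appearing $n\le t$ times in $\bA_t$, so $\Vert\tilde x\Vert_{\bA_t^{-1}}\le 1/\sqrt n$ and $|\sum_\tau\Delta_\tau|\le n\gamma$, yielding $\le n\gamma/\sqrt n = \gamma\sqrt n \le \gamma\sqrt t$. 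Collecting the three bounds gives $\Vert\tilde\btheta - \hat{\btheta}_t\Vert_{\bA_t} \le \alpha_t + S'_t/\sqrt\lambda + \gamma\sqrt t$.

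The main obstacle is handling the target-arm corruption term so that it comes out as $\gamma\sqrt t$ rather than a looser $\gamma t/\sqrt\lambda$: this requires exploiting that all target-arm corruptions are aligned along the single direction $\tilde x$, and that $\bA_t$ already accumulates $\Omega(\cdot)$ curvature in that direction from the (many) target-arm pulls, so that $\Vert\tilde x\Vert_{\bA_t^{-1}}$ shrinks like $1/\sqrt n$. One must be careful that this curvature argument only needs $n$ to be the number of target-arm pulls and is valid regardless of how the adversary chose $\Delta_\tau$. Everything else is a routine instantiation of the self-normalized martingale bound of \cite{Improved_Algorithm} plus triangle inequalities, so no new probabilistic machinery is needed beyond that single high-probability event (hence the $1-\delta$).
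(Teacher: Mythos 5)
Your proposal is correct and follows essentially the same route as the paper's proof: the same ridge-regression decomposition into noise, regularization, and corruption terms, the $S'_t/\sqrt{\lambda}$ bound via $\bA_t \succeq \lambda\bI$ for non-target corruptions, and the key observation that target-arm corruptions align along $\tilde x$ so that $\Vert n(\tilde x)\gamma\,\tilde x\Vert_{\bA_t^{-1}} \leq \gamma\sqrt{n(\tilde x)} \leq \gamma\sqrt{t}$. No substantive differences to report.
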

\begin{proof}
Based on the closed form solution of ridge regression, we have
\begin{eqnarray*}
\hat{\btheta}_t =  \tilde\btheta - \lambda \bA_t^{-1}  \tilde\btheta    + \bA_t^{-1} \sum_{\tau=1}^t x_{a_\tau}[\eta_\tau + \Delta_\tau]
\end{eqnarray*}
Therefore, using ideas from \cite{Improved_Algorithm}, we can have 
\begin{eqnarray*}
    \Vert  \hat{\btheta}_t -   \tilde\btheta\Vert _{\bA_t} &\leq &  \lambda^{1/2}\Vert \btheta^*\Vert_2    + \Vert\sum_{\tau=1}^t x_{a_\tau}\eta_\tau\Vert_{\bA_t^{-1}} + \Vert \sum_{\tau=1}^t x_{a_\tau}\Delta_\tau\Vert _{\bA_t^{-1}}\\
    &\leq &  \alpha_t + \Vert \sum_{\tau=1}^t x_{a_\tau}\Delta_\tau\Vert _{\bA_t^{-1}}\\
    &\leq &  \alpha_t + \Vert \sum_{\tau\in\{1 \dots t\}, x_{a_\tau}\neq\tilde x} x_{a_\tau}\Delta_\tau\Vert _{\bA_t^{-1}} + \Vert \sum_{\tau\in\{1 \dots t\}, x_{a_\tau} = \tilde x} x_{a_\tau}\Delta_\tau\Vert _{\bA_t^{-1}}\\
    &\leq & \alpha_t + \Vert \sum_{\tau\in\{1 \dots t\}, x_{a_\tau}\neq\tilde x} x_{a_\tau}\Delta_\tau\Vert _{\bA_t^{-1}} + \Vert \gamma n(\tilde x)\tilde x \Vert_{\bA_t^{-1}}\\
    &\leq & \alpha_t + \Vert \sum_{\tau\in\{1 \dots t\}, x_{a_\tau}\neq\tilde x} x_{a_\tau}\Delta_\tau\Vert_2/\sqrt{\lambda} + \Vert \gamma n(\tilde x)\tilde x \Vert_{\bA_t^{-1}}\\
    &\leq & \alpha_t + S'_t/\sqrt{\lambda} + \Vert \gamma n(\tilde x)\tilde x \Vert_{\bA_t^{-1}}\\
    &\leq & \alpha_t + S'_t/\sqrt{\lambda} + \frac{\gamma n(\tilde x)}{\sqrt{n(\tilde x)}}\\
    &\leq & \alpha_t + S'_t/\sqrt{\lambda} + \gamma\sqrt{t}\\
\end{eqnarray*} 
with probability at least $1-\delta$, where $n(\tilde x)$ is the times target arm has been pulled. The second step is based on the definition of $\alpha_t$ and introduces the high probability bound, the fourth step is because we have $|\Delta_\tau| < \gamma$ if $x_{a_\tau} = \tilde x$; the fifth step is because of $ \bA_t \succeq \lambda\bI $; and the second last inequality follows Eq~\eqref{eq:x_Ainv}. Finally, notice that $n(\tilde x) \leq t$ and we finish the proof.
\end{proof}

Let us first analyze the attack in the first stage. Denote $R_{T}(\btheta)$ as the regret of LinUCB until round $T$, where $\btheta$ is the ground-truth parameter. We know from~\cite{Improved_Algorithm} that if the rewards are all generated by $\btheta$ then with probability at least $1-\delta$ we have
\begin{equation}
    R_{T}(\btheta) =  \alpha_T\sqrt{dT\log\left(\frac{1+T/\lambda}{\delta}\right)} = O(d\sqrt{T}\log(T/\delta))
\end{equation}
where $\alpha_t = \sqrt{d\log\left(\frac{1+t/\lambda}{\delta}\right)} + \sqrt{\lambda}$. 
Then the attack in the first $T_1$ rounds based on $\btheta_0$ should make the bandit algorithm pull $\tilde{x}$ at least $T_1 - R_{T_1}(\btheta_0)/\epsilon_0^*$ times. According to Lemma~\ref{lemma:theta_parallel}, with probability at least $1-\delta$ parameter estimation error is bounded by  \begin{equation}\label{eq:theta_parallel_error}
\Vert  \tilde\btheta_{\parallel, T_1} -  \btheta^*_\parallel \Vert_2 \leq \sqrt{2\log(1/\delta)}/\sqrt{T_1 - R_{T_1}(\btheta_0)/\epsilon_0^*} \leq 2\sqrt{2\log(1/\delta)}/\sqrt{T_1}
\end{equation} for large enough $T_1$. This means we have 
\begin{equation}\label{eq:gamma}
    \gamma = \Vert \tilde x^\mt\big( \tilde\btheta_{\parallel, T_1} -  \btheta^*_\parallel\big) \Vert \leq 2\sqrt{2\log(1/\delta)}/\sqrt{T_1}
\end{equation}

Now we prove the attack is successful with high probability. Consider the regret of the LinUCB against $\tilde\btheta$ as the ground-truth parameter. The estimation error in $ \hat\btheta_t - \tilde\btheta$ has three sources: the sub-Gaussian noise, the rewards on non-target arms in the first stage generated by $\btheta_0$ (the rewards on the target arm are corrected to $\tilde\btheta$ in step 19-20 in Algorithm \ref{alg:attack}), and the unattacked rewards on target arm in the second stage generated by $\btheta^*$. According to Lemma~\ref{lemma:robustrness_ridgereg}, with probability at least $1-3\delta$, we have 
\[ \Vert\hat\btheta_t - \tilde\btheta\Vert_{\bA_t} \leq \alpha_t + R_{T_1}(\btheta_0)/\sqrt{\lambda} + 2\sqrt{2t\log(1/\delta)}/\sqrt{T_1}, t > T_1.\]

To show the number of rounds pulling non-target arms, we first look at the regret against $\tilde\btheta$, i.e., $R_{T}(\tilde\btheta)$.
\begin{align*}
R_{T}(\tilde\btheta) 
&\leq \sum_{t=1}^T \left(\tilde x^\mt\tilde\btheta-x_{a_t}^\mt\tilde\btheta\right)\\
&\leq \sum_{t=1}^T \left(\tilde x^\mt\hat\btheta_t + \text{CB}_t(\tilde x)-x_{a_t}^\mt\tilde\btheta\right)\\
&\leq \sum_{t=1}^T \left(x_{a_t}^\mt\hat\btheta_t + \text{CB}_t( x_{a_t})-x_{a_t}^\mt\tilde\btheta\right)\\
&\leq \sum_{t=1}^T 2\text{CB}_t( x_{a_t})\\
&\leq 2\sqrt{T\sum_{t=1}^T \text{CB}^2_t( x_{a_t})}\\
&\leq 2 \Vert\hat\btheta_T - \tilde\btheta\Vert_{\bA_T} \sqrt{T\sum_{t=1}^T  \lVert x \rVert^2_{\bA_t^{-1}}}\\
&\leq 2 \big(\alpha_T + R_{T_1}(\btheta_0)/\sqrt{\lambda} + 2\sqrt{2T\log(1/\delta)}/\sqrt{T_1}\big) \sqrt{dT\log\left(\frac{1+T/\lambda}{\delta}\right)}
\end{align*}
holds with probability at least $1-3\delta$. And LinUCB will pull non-target arms at most $R_{T}(\tilde\btheta)/\epsilon^*$ times, which can be bounded by
\begin{align*}
 R_{T}(\tilde\btheta) /\epsilon^* &\leq 2 \left(\alpha_T + R_{T_1}(\btheta_0)/\sqrt{\lambda} + 2\sqrt{2T\log(1/\delta)}/\sqrt{T_1}\right) \sqrt{dT\log\left(\frac{1+T/\lambda}{\delta}\right)} /\epsilon^*\\
 &\leq 2 \left(\sqrt{dT\log\left(\frac{1+T/\lambda}{\delta}\right)} + \sqrt{\lambda} + R_{T_1}(\btheta_0)/\sqrt{\lambda} + 2\sqrt{2T\log(1/\delta)}/\sqrt{T_1}\right) \sqrt{dT\log\left(\frac{1+T/\lambda}{\delta}\right)} /\epsilon^*\\
\end{align*}
and is in the order of
\begin{equation}\label{eq:non_target_pulls}
    O\left(d\left(\sqrt{\log(T/\delta)} + \sqrt{T_1}\log{(T_1/\delta)}+\sqrt{T\log(1/\delta)}/\sqrt{T_1}\right)\sqrt{T\log (T/\delta)}/\epsilon^*\right)
\end{equation}
The $\sqrt{T_1}\log{(T_1/\delta)}$ term is due to the ``corrupted'' rewards of non-target arms observed in the first stage. Setting $T_1 = T^{1/2}$ gives us the minimum number of rounds pulling non-target arms in $\tilde O(T^{3/4})$ according to Eq~\eqref{eq:non_target_pulls}.

Now we prove the total cost $C(T)$. Note that in order to make the attack ``stealthy'', we inject sub-Gaussian noise $\tilde \eta_t$ on perturbed reward to make it stochastic. We separate the total cost by the cost on changing the mean reward and the cost of sub-Gaussian noise as
\begin{equation}\label{eq:cost_decompose}
C(T) = \sum_{t=\{1..T\}, \tilde r_t \neq r_t} |\Delta r_t| \leq \sum_{i=1}^N|x_{a_i}^\mt (\tilde\btheta - \btheta^*)| + \sum_{i=1}^N|\tilde \eta_i|
\end{equation}
where $i \in \{t=\{1..T\}: \tilde r_t \neq r_t\}$. Let $N = |\{i\}|$ be the total rounds of attack. Since we know the times attacking non-target arms is bounded by Eq~\ref{eq:non_target_pulls} and attack target arm at most $T_1$ times, we have with probablity $1-\delta$,
\begin{equation}\label{eq:n_rounds_attack}
N =T_1 + O\left(d\left(\sqrt{\log(T/\delta)} + \sqrt{T_1}\log{(T_1/\delta)}+\sqrt{T\log(1/\delta)}/\sqrt{T_1}\right)\sqrt{T\log (T/\delta)}/\epsilon^*\right)   = \tilde O(T^{3/4})    
\end{equation} when setting $T_1 = T^{1/2}$.

Notice that since $\tilde \eta_t$ is $R$-sub-Gaussian, its absolute value $|\tilde \eta_t|$ is also $R$-sub-Gaussian, and $\mathbb{E}[|\tilde \eta_t|] < L$ for some constant $L$ following Proposition 2.5.2 in \cite{vershynin2018high}. According to the general Hoeffding's inequality, with probability at least $1-\delta$,
\begin{equation}
\sum_{i=1}^N |\tilde \eta_i| \leq NL + \sqrt{NL\log(2/\delta)} = O(N + \sqrt{N\log(1/\delta)})
\end{equation}
 Thus the second term of  Eq~\eqref{eq:cost_decompose}
has the order of $O(N) = \tilde O(T^{3/4})$. The first term of  Eq~\eqref{eq:cost_decompose} is bounded by $2N + 2T_1$ because each attack changes the mean reward at most 2 except the compensation step in line 20, and the reward compensation on target arm can be bounded by $2T_1$ because target arm is pulled at most $T_1$ times in the first stage. Overall, with probability at least $1-4\delta$
\begin{equation}\label{eq:cost_final}
    C(T)\leq \sum_{i=1}^N|x_{a_i}^\mt (\tilde\btheta - \btheta^*)| + \sum_{i=1}^N|\tilde \eta_i| \leq 2N + 2T_1 + NL + \sqrt{NL\log(2/\delta)} = \tilde O(N) = \tilde O(T^{3/4})
\end{equation}
when setting $T_1 = T^{1/2}$.

\end{proof}

\subsection{Proof Sketch of Corollary~\ref{corollary:attackrobustPhE}}
The proof is similar to the proof of Theorem~\ref{theorem:attackLinUCB}, and thus we only explain the difference here.

Instead of using Lemma~\ref{lemma:robustrness_ridgereg} to analyze the impact of perturbed rewards generated by $\btheta_0$ (against $\tilde\btheta$) collected in the first stage, we know RobustPhE has an additional regret term $O(S^2)$ for corruption $S$ (assuming $S$ is unknown to the bandit algorithm).  
Since the bandit algorithm observes $T_1$ rewards in the first stage, $S \leq 2T_1$ and the additional regret due to rewards from first stage is $\tilde O(T_1^2)$. For the unattacked rewards on target arm in the second stage generated by $\btheta^*$, we view them as rewards generated by $\tilde\btheta$ with misspecification error $\gamma$, i.e., $|\tilde x^\mt(\btheta^* - \tilde\btheta)|\leq \gamma $. Proposition 5.1 in \cite{lattimore2020learning} showed that the phase elimination algorithm with  misspecification $\gamma$ has additional regret in $O(\gamma\sqrt{d} T\log(T))$. With $\gamma \leq 2\sqrt{2\log(1/\delta)}/\sqrt{T_1}$ by  Eq~\eqref{eq:gamma}, the total regret is \[R_{T}(\tilde\btheta) = O\Big(d\sqrt{T}\log (T/\delta) + \sqrt{d}T\log(T)\log (1/\delta)/\sqrt{T_1} + T_1^2\Big)\] with probability at least $1-2\delta$. Therefore, we have with probability at least $1-2\delta$, the attack strategy  will fool RobustPhE to pull  non-target arms at most $O\Big((d\sqrt{T}\log (T/\delta) + \sqrt{d}T\log(T)\log (1/\delta)/\sqrt{T_1} + T_1^2)/\epsilon^*\Big)$ rounds. 

Similar to Eq~\eqref{eq:n_rounds_attack}, we bound the total rounds of attacking RobustPhE by
\begin{equation}\label{eq:n_rounds_attack_robustphe}
N =T_1 + O\Big(\big(d\sqrt{T}\log (T/\delta) + \sqrt{d}T\log(T)\log (1/\delta)/\sqrt{T_1} + T_1^2\big)/\epsilon^*\Big)  
\end{equation}
From Eq~\eqref{eq:cost_final}, we know the total cost is in the same order as the rounds of attack.
So with probability at least $1-3\delta$ the total cost is
\[
O\Big(T_1 + (d\sqrt{T}\log (T/\delta) + \sqrt{d}T\log(T)\log (1/\delta)/\sqrt{T_1} + T_1^2)/\epsilon^*\Big).
\] 
Setting $T_1 = T^{2/5}$ gives us the minimum attack cost $\tilde O(T^{4/5})$, and the non-target arms are pulled at most $\tilde O(T^{4/5})$ rounds. 

\begin{remark}

Note that we bound the total corruption by $T_1$, which means the adversary does not need to compensate the rewards on the target arm as shown in line 20 in Algorithm \ref{alg:attack}. The robustness of RobustPhE allows us to carry over the rewards in the first stage while LinUCB does not. 

\end{remark}

\subsection{Attack under unknown $T$}\label{sec:multi-stage}

Our two-stage null space attack algorithm requires that $T$ is known for the convenience of analysis. Here we discuss a promising idea that leveraging the \emph{doubling trick} to extend the attack to the case of unknown $T$, which will lead to a multi-stage attack that repeatedly adjusts the target parameter $\tilde\theta$ at each stage. More concretely, to attack LinUCB without knowing $T$, the adversary can start with a pre-specified  horizon $T_0$ and run the two-stage attack. Once the actual horizon reaches $T_0$, the adversary will expand the horizon to $T_0^2$ (i.e., the ``doubling'' trick), and views   previous $T_0$ rounds as the new first stage, re-calculates target parameter $\tilde\theta$ using rewards from the $T_0$ rounds and runs the new attack strategy until $T_0^2$. Using this exponential horizon sequence $\{T_i = T_0^{2^i}, i\in\mathbb{N}\}$, we have a multi-stage attack on LinUCB with unknown time horizon. Similarly, to attack RobustPhE, we will need to adjust the horizon sequence to be $\{T_i = T_0^{(5/2)^i}, i\in\mathbb{N}\}$, which will lead to a similar multi-stage attack on RobustPhE. We believe this direction is feasible based on our current analysis, and more thorough and complete proof should be the target of our next work.

\section{Additional Experiments}\label{sec:exp_supp}
\begin{figure}[ht]
\vspace{-1mm}
\centering
\setlength\tabcolsep{.5pt}
\includegraphics[width=7.5cm]{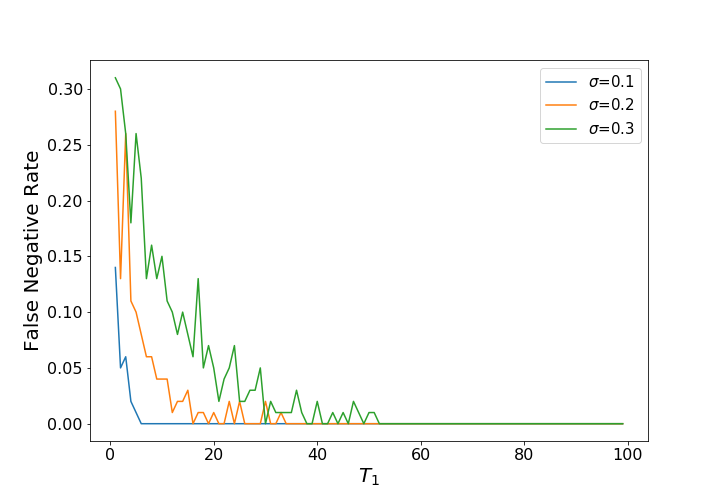}
\vspace{-1mm}
\caption{False negative rate of attackability test}\label{fig:appendix}
\vspace{-1mm}
\end{figure}


In Figure \ref{fig:appendix}, we study the false negative rate of the attackability test in Algorithm~\ref{alg:attack}, i.e., how often the adversary mistakenly asserts that an attackable environment is not attackable. As we explained in Proof of Theorem 1,  
the wrong assertion is because of using estimated $\tilde\btheta_\parallel$ instead of the ground-truth bandit parameter. In this experiment, we consider a challenging attackable three-arm environment with $\mathcal{A}=\{x_1 = (0, 1), x_2 = (0.11, 1.1), x_3 = (-2, 0)\}$, $\tilde x = x_1$ and $\btheta^* = (0, 0.5)$. By solving CQP~\eqref{lp:attackability}, we have attackability index  $\epsilon^* = 0.005$ and certificate $\tilde\btheta_\perp = (-0.5, 0)$\footnote{We introduce arm $x_3$ to guarantee the first dimension of $\tilde\btheta_\perp$ cannot be smaller than $-0.5$. Comparing $\tilde x$ and $x_2$ and we can see the optimal solution is $\epsilon^* = 0.005$.}. We test two-stage null space attack against LinUCB with $T = 10,000$ and the adversary will test the attackability after the first $T_1 = T^{1/2} = 100$ rounds. We vary $T_1$ from $1$ to $100$ to see how many iterations is sufficient for attackability test. We report averaged results of 100 runs. We also vary the standard derivation $\sigma$ of Gaussian noise from  0.1 to 0.3. In Figure~\ref{fig:appendix}, we can see that the false negative rate is almost zero when $T_1>50$, suggesting $T_1 = 100$ is sufficient.  When $\sigma = 0.1$ the adversary only needs around 10 rounds to make a correct assertion. We also notice the false negative rate becomes higher under a larger noise scale. As suggested in Lemma~\ref{lemma:theta_parallel}, the error in  $\tilde\btheta_\parallel$ estimation is larger if noise scale is larger or the number of target arm's rewards $n(\tilde x)$ is smaller, which highly depends on $T_1$. Larger error means CQP~\eqref{lp:attackability} with $\tilde\btheta_\parallel$ is more likely to be unfeasible and gives false negative assertion. However, $T_1 = 100$ is still enough for the attackability test when $\epsilon^* = 0.005$.

\end{document}